\documentclass{article}

\usepackage{microtype}
\usepackage{graphicx}
\usepackage{subcaption}
\usepackage{booktabs} % for professional tables

\usepackage{hyperref}

\usepackage{algpseudocode}
\usepackage[accepted]{icml2026}

\usepackage{amsmath}
\usepackage{amssymb}
\usepackage{mathtools}
\usepackage{amsthm}
\usepackage{graphicx}

\usepackage{bm}
\usepackage{booktabs}

\newcommand{\nop}[1]{}

\usepackage{amsmath,amsfonts,bm}

\def\eqref#1{equation~\ref{#1}}
\def\1{\bm{1}}

\def\eps{{\epsilon}}

\DeclareMathAlphabet{\mathsfit}{\encodingdefault}{\sfdefault}{m}{sl}
\SetMathAlphabet{\mathsfit}{bold}{\encodingdefault}{\sfdefault}{bx}{n}

\DeclareMathOperator*{\argmax}{arg\,max}

\usepackage{tabularx}
\usepackage{multirow}
\usepackage[skip=2pt]{caption}
\usepackage{enumitem}
\usepackage{tcolorbox}
\usepackage{hyperref}
\usepackage{url}

\usepackage{algorithm}
\usepackage{algpseudocode}

\usepackage[capitalize,noabbrev]{cleveref}

\theoremstyle{plain}
\newtheorem{theorem}{Theorem}[section]

\newtheorem{lemma}[theorem]{Lemma}

\theoremstyle{definition}

\theoremstyle{remark}

\usepackage[textsize=tiny]{todonotes}

\usepackage{pifont}% http://ctan.org/pkg/pifont
\usepackage{wrapfig}

\usepackage[capitalize,noabbrev]{cleveref}

\theoremstyle{plain}
\theoremstyle{definition}
\theoremstyle{remark}

\usepackage[textsize=tiny]{todonotes}

\icmltitlerunning{Efficient Training-Free Multi-Token Prediction via Embedding-Space Probing}

\makeatletter
\renewcommand{\ICML@preprint}{%
  \textit{Preprint. \today.}\\
  Qualcomm AI Research is an initiative of Qualcomm Technologies Inc.}
\makeatother

\begin{document}

\twocolumn[
  \icmltitle{Efficient Training-Free Multi-Token Prediction via Embedding-Space Probing}

  % It is OKAY to include author information, even for blind submissions: the
  % style file will automatically remove it for you unless you've provided
  % the [accepted] option to the icml2026 package.

  % List of affiliations: The first argument should be a (short) identifier you
  % will use later to specify author affiliations Academic affiliations
  % should list Department, University, City, Region, Country Industry
  % affiliations should list Company, City, Region, Country

  % You can specify symbols, otherwise they are numbered in order. Ideally, you
  % should not use this facility. Affiliations will be numbered in order of
  % appearance and this is the preferred way.
  \icmlsetsymbol{equal}{*}

  \begin{icmlauthorlist}
    \icmlauthor{Raghavv Goel}{comp}
    \icmlauthor{Mukul Gagrani}{comp}
    \icmlauthor{Mingu Lee}{comp}
    \icmlauthor{Chris Lott}{comp}
  \end{icmlauthorlist}

  \icmlaffiliation{comp}{Qualcomm AI Research}

  \icmlcorrespondingauthor{Raghavv Goel, Mukul Gagrani, Mingu Lee}{\{raghgoel,mgagrani,mingul\}@qti.qualcomm.com}

  % You may provide any keywords that you find helpful for describing your
  % paper; these are used to populate the "keywords" metadata in the PDF but
  % will not be shown in the document
  \icmlkeywords{Machine Learning, ICML}

  \vskip 0.3in
]

% this must go after the closing bracket ] following \twocolumn[ ...

% This command actually creates the footnote in the first column listing the
% affiliations and the copyright notice. The command takes one argument, which
% is text to display at the start of the footnote. The \icmlEqualContribution
% command is standard text for equal contribution. Remove it (just {}) if you
% do not need this facility.

% Use ONE of the following lines. DO NOT remove the command.
% If you have no special notice, KEEP empty braces:
\printAffiliationsAndNotice{}  % no special notice (required even if empty)
% Or, if applicable, use the standard equal contribution text:
% \printAffiliationsAndNotice{\icmlEqualContribution}

\begin{abstract}

Large Language Models (LLMs) possess latent multi-token prediction (MTP) abilities despite being trained only for next-token generation. We introduce ESP (Embedding-Space Probing), a simple and \textbf{training-free} MTP method that probes an LLM using \textbf{on-the-fly mask tokens} drawn from its embedding space, enabling parallel future-token prediction without modifying weights or relying on draft models. 
We construct a speculative token tree by sampling Top-$K$ candidates from mask-token logits and apply a lightweight pruning rule to retain high-probability continuations. During generation, predictions are verified in parallel, yielding lossless decoding while significantly reducing model calls and increasing token throughput.
% Our probing-based MTP method consistently outperforms existing training-free baselines, improving acceptance length by $\sim 12\%$ on \texttt{LLaMA3} and $8\!-\!12\%$ on \texttt{Qwen3}, and increasing throughput by up to $15\!-\!19\%$. 
ESP consistently outperforms existing training-free baselines, improving acceptance length by $7-11\%$ over LADE on \texttt{LLaMA3} and $7-8\%$ on \texttt{Qwen3}, and increasing throughput by up to $\mathbf{15-19\%}$ over the strongest baseline.
Finally, we provide theoretical insight and empirical evidence showing that decoder layers naturally align mask-token representations with next-token states, enabling accurate multi-step predictions \textbf{without retraining or auxiliary models}.
\end{abstract}

\section{Introduction}
\label{sec:intro}

Recent work in LLM inference has explored \textbf{multi-token prediction (MTP)} as a way to better utilize GPU parallelism, reduce latency, and accelerate generation. Traditional autoregressive decoding generates one token per step, leaving substantial compute underutilized. MTP methods instead aim to predict multiple future tokens in parallel \citep{gloeckle2024better}. However, many existing approaches rely on training auxiliary heads, modifying base model weights, or employing external draft models, as commonly seen in speculative decoding frameworks \citep{cai2024medusa, chen2023accelerating, leviathan2023fast}.

Despite the effectiveness of these approaches, training even small auxiliary models requires significant engineering effort, including dataset construction, architecture tuning, and days of GPU compute \citep{cottier2024rising, goel2024direct}. Moreover, these methods introduce additional parameters and memory overhead—for example, \citet{cai2024medusa} add LM heads of size $\sim$400M parameters for \texttt{LLaMA3.2-3B-Instruct} \citep{dubey2024llama}, while \citep{li2024eagle} adds additional draft decoder layers—which makes them unsuitable for \textbf{edge devices} and compute-constrained environments. In contrast, training-free methods offer plug-and-play operation with frozen models, \textbf{require no retraining}, and \textbf{generalize across architectures and tasks while preserving lossless generation}.

We depart from prior MTP approaches that modify training objectives such as multi-head future prediction \citep{cai2024medusa}), introduce additional heads or inference-time markers, such as PaSS \cite{monea2023pass}, or are primarily diagnostic rather than algorithmic (e.g., Future Lens \citep{pal2023future}). Instead, we present ESP, a \textbf{training-free, single-model, probing-based} approach that synthesizes mask tokens directly in the embedding space to elicit multi-token distributions from a frozen model. These parallel proposals are organized into a dynamic draft tree and pruned using a simple rule that optimizes acceptance under fixed block complexity, enabling efficient and lossless decoding without auxiliary models.

In this paper, we introduce a training-free, plug-and-play MTP method that works with any frozen LLM. ESP builds on a simple but powerful idea: \textbf{probing the model’s internal generative capacity} using \textbf{on-the-fly generated mask tokens}. These tokens, synthesized in the model’s embedding space and injected into the prompt, elicit predictions of multiple future tokens in parallel. The resulting predictions are then jointly verified by the base model, enabling efficient and lossless decoding.

To structure these predictions, we use a dynamic token-tree expansion mechanism that adaptively grows token paths based on cumulative probabilities. A lightweight pruning step removes duplicated or low-probability paths, improving efficiency while maintaining diversity among parallel proposals. ESP supports multiple mask-token initialization strategies, including mean-of-prompt embeddings and sampling from the token embedding space; empirically, we find that mean-prompt initialization performs best across different model families.

To ensure practical scalability, we develop an efficient implementation of tree-attention masks and positional index updates that dramatically reduces runtime overhead, yielding up to $26\%$ higher token throughput than standard decoding for \texttt{LLaMA3.1-8B-Instruct}. 

We evaluate our method on SpecBench \citep{xia-etal-2024-unlocking}, a diverse benchmark covering summarization, translation, reasoning, coding, and mathematical tasks, using both \texttt{LLaMA3} and \texttt{Qwen3} \citep{yang2025qwen3}. ESP consistently outperforms training-free baselines such as Lookahead Decoding \citep{fu2024break} and Prompt Lookup Decoding \citep{saxena2023prompt}, achieving higher acceptance rates, fewer forward passes, and improved token throughput across tasks, different sampling temperatures, and model families.

Finally, we conduct quantitative and qualitative studies of token acceptance behavior, illustrating how acceptance varies with mask-token design and task type. Our method demonstrates strong performance across both open-ended tasks (writing, roleplay) and constrained tasks (summarization, math, reasoning), and is particularly suitable for compute-limited settings such as edge devices.

We make the following contributions:
\begin{itemize}
    \item[1.] \textbf{Training-free multi-token prediction via probing:} We introduce ESP (Embedding-Space Probing), a novel MTP paradigm that uses mask-token probing in the base model’s embedding space, enabling multi-token generation without retraining or external draft models.
    \item[2.] \textbf{Dynamic tree expansion for flexible decoding:} We propose a dynamic speculative token-tree expansion mechanism that adaptively grows based on predicted token probabilities, removing the need for manually designed tree structures.
    \item[3.] \textbf{Efficient static-tree implementation:} We present a GPU-friendly implementation of static tree attention masks and position updates that significantly improves throughput for fixed structures.
    \item[4.] \textbf{Theoretical and empirical justification:} We prove that alignment between mask-token and true-token representations (via cosine similarity) guarantees inclusion of the correct token in Top-$K$ predictions, and provide empirical evidence showing how this alignment emerges across layers.
    \item[5.] \textbf{Comprehensive evaluation on SpecBench:} We perform extensive experiments demonstrating consistent improvements across models, tasks, and draft-tree sizes.
\end{itemize}

\section{Background}
\label{sec:background}
\begin{figure*}[htbp] % The asterisk makes it span two columns
    \centering
    \includegraphics[width=\textwidth]{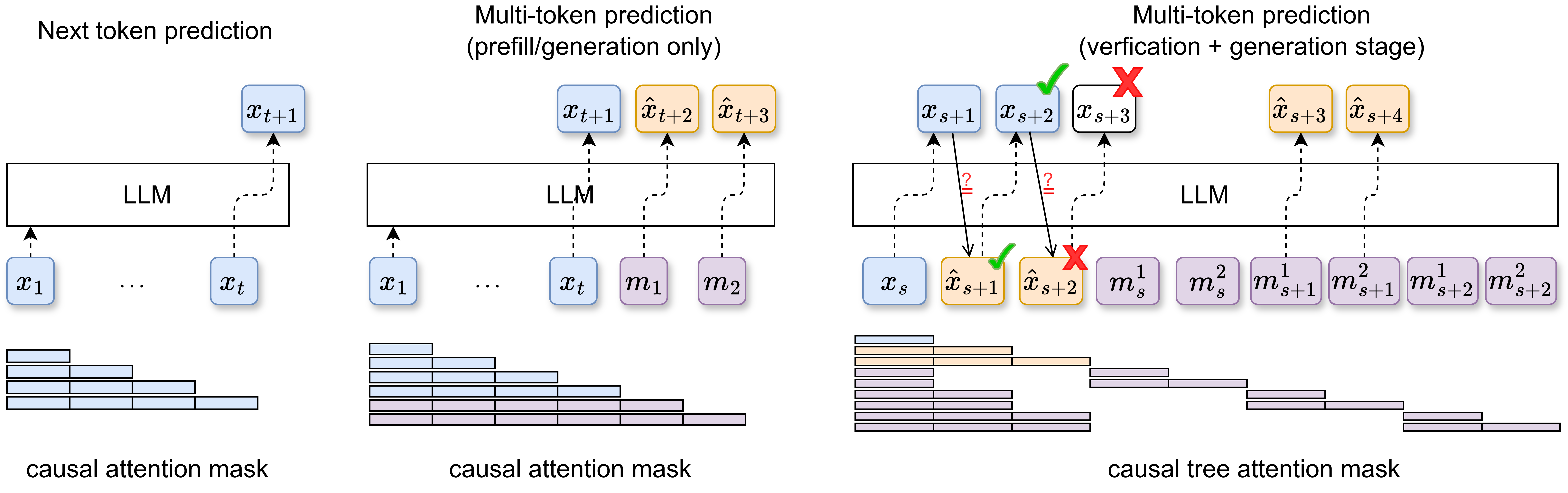} % Adjust width as needed
    \caption{(Left) Standard next-token prediction setup for autoregressive models, (middle) multi-token prediction during prefill-stage by probing mask tokens which are appended to prompt tokens, (right) multi-token prediction with parallel verification and generation. Mask tokens are associated with last generated token ($x_{s}$) and future tokens ($\hat{x}_{s+1}, \hat{x}_{s+2}$) through custom tree attention mask.}
    \label{fig:prelim_diagram}
\end{figure*}
We consider a frozen autoregressive language model \(f_{\theta}\) with parameters \(\theta\). Given a prompt sequence \(x_{1:t}\), the model produces next-token logits and distribution:
\begin{equation}
    f_{\theta}(x_{1:t}) \in \mathbb{R}^{V}, 
    \qquad 
    x_{t+1} \sim \text{softmax}\!\big(f_{\theta}(x_{1:t})\big).
\end{equation}

To enable \emph{multi-token prediction} without modifying model parameters, we inject \textbf{mask tokens} \(m_{1}, m_{2}, \dots, m_{k}\) into the prompt. These tokens are computed dynamically from the model's embedding space and appended as:
\[
    x_{1:t},\; m_{1}, m_{2}, \dots, m_{k}.
\]
Each mask token \(m_{i}\) is designed to elicit a prediction of a future token \(x_{t+1+i}\). We use pairs of mask tokens \((m^{1}_{s+i}, m^{2}_{s+i})\) that share parameters across positions but attend to different contexts via a causal tree attention mask. After inserting mask tokens, the model outputs future-token predictions at the mask positions:
\begin{equation}
    \hat{x}_{t+1+i} \sim \text{softmax}\!\left(f_{\theta}\!\big(x_{1:t}, m_{1}, m_{2}, \dots, m_{k}\big)[t{+}1{+}i]\right).
\end{equation}

\paragraph{Verification (Speculative-Decoding Style).}
To ensure correctness, predicted tokens are verified against the base model’s own next-token distribution (simultaneous verification as in \citep{lin2024bita}):
\begin{equation}
    \hat{x}_{t+1+i} \stackrel{?}{=} x_{t+1+i} 
    \sim \text{softmax}\!\big(f_{\theta}(x_{1:t+i})\big).
\end{equation}
If \(\hat{x}_{t+2}\) is accepted, it is appended to the prefix and used to verify \(\hat{x}_{t+3}\), and so on. This yields a \emph{lossless} procedure consistent with speculative decoding and multi-token prediction \citep{fu2024break, leviathan2023fast}, with the key difference that
\emph{ESP generates speculative futures via mask-token probing rather than a separate draft model}.
We visualize vanilla autoregressive decoding, mask-token probing, and simultaneous verification in \cref{fig:prelim_diagram}.

We defer the discussion of \emph{tree branching}---where multiple candidate tokens are considered per mask token---to \cref{sec:methods}. There, we introduce a dynamic token tree construction mechanism that expands token paths based on cumulative probabilities and includes pruning to improve diversity.

\section{Methods}
\label{sec:methods}
% \vspace{-0.25cm}
We propose ESP, a training-free multi-token prediction framework that probes frozen LLMs using dynamically generated mask tokens. These mask tokens are injected into the prompt and used to elicit predictions for multiple future tokens in a single forward pass. ESP doesn't require any auxiliary draft models or fine-tuning.
The predicted tokens are verified sequentially using the base model itself, ensuring consistency with its autoregressive behavior. To support richer token exploration, we introduce a dynamic token tree construction mechanism that expands future token paths based on cumulative probabilities, along with a pruning strategy to eliminate redundant tokens. We also define block complexity as a key setting to trade-off parallelism and compute cost.

\subsection{Mask Token Injection}
\label{subsec:mask_token_design}
Let the input prompt be a sequence of tokens $x_{1:t} \triangleq [x_1, x_2, \dots, x_t]$. These tokens are first projected into the embedding space via the model's embedding matrix $E \in \mathbb{R}^{V \times d}$, where $V$ is the vocabulary size and $d$ is the embedding dimension: $\mathbf{e}_i = E[x_i],\ \forall, i = 1, \dots, t$.

We initialize each of the $k$ mask tokens with the mean of the prompt embeddings:

\begin{equation}
m_i = \frac{1}{t} \sum_{j=1}^{t} \mathbf{e}_j,\quad \forall, i = 1, \dots, k. 
\label{eq:mask_init_mean}
\end{equation}

This soft, prompt-context-dependent initialization produces a mask embedding statistically aligned with the prompt distribution, which we find probes the frozen LLM more effectively.

During the generation phase, mask tokens are updated based on the tokens generated, adding more context-based information:

\begin{equation}
m_i[s+1] = m_i[s] + \lambda(\mathbf{e}_{t+s} - m_i[s]),\quad \forall, i \label{eq:mask_update_mean}
\end{equation}

where $s$ denotes the generation step and $\lambda$ is a positive scalar.

Although the mask tokens take the same embedding value across all token trajectories in the future token tree, their position IDs and past context differ, leading to diverse generations. We compare alternative initialization strategies — prompt-based hard initialization and embedding-distribution sampling — in \cref{sec:results}.

\subsection{Why Mask Tokens Enable Multi-Token Prediction}
\label{subsec:why_online_ssd_works}
\begin{figure}
    \vspace{-0.25cm}
    \centering
    \includegraphics[width=\linewidth]{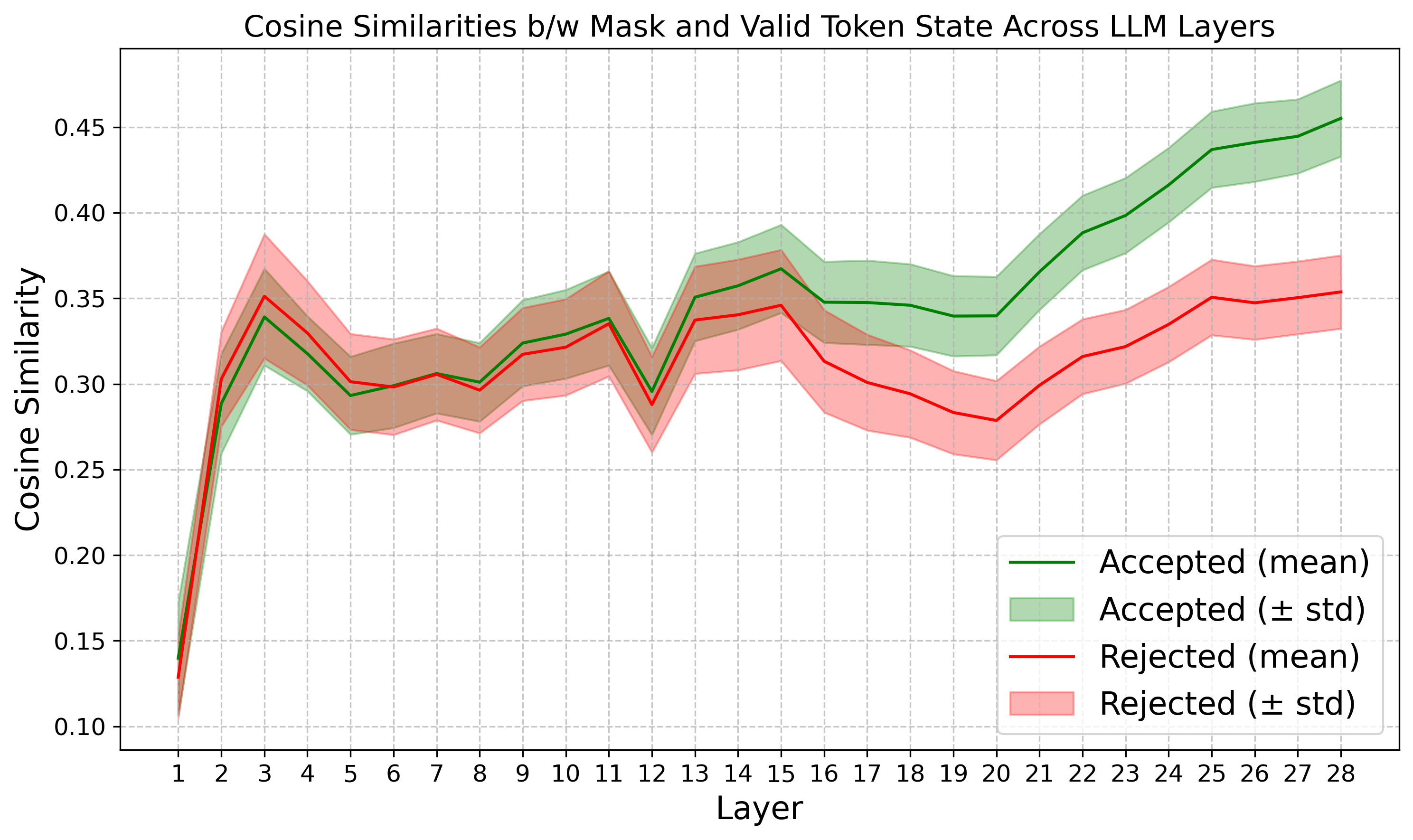}
    \caption{We use Dolly-Databricks (creative-writing) \cite{DatabricksBlog2023DollyV2} samples (100) to measure average cosine similarity across layers for mask and true-future token hidden-states. For \texttt{Llama3.2-3B-Instruct}, higher cosine similarity in later layers (15 onwards) correlates with token acceptance (green), while lower similarity correlates with rejection (red).}
    \label{fig:cos_sim_acc_rej}
    \vspace{-0.4cm}
\end{figure}
Our method relies on the observation that decoder layers progressively align the mask token representation, aligning its hidden state with that of valid tokens. This alignment is critical because the LM head computes logits by taking inner products between the final hidden state and its vocabulary columns $W_r \in \mathbb{R}^d$. Higher inner product results in a higher logit, increasing the likelihood that the token appears in the top-K candidates.

To quantify this alignment, we track the evolution of cosine similarity between mask and next true token hidden states across layers. Specifically, we compare the hidden states of the next true token $x_{t+1}$ and the corresponding mask token $m_{t+1}$, both of which are used to predict the token at position t+2  (next-next true token). As shown in \cref{fig:cos_sim_acc_rej}, accepted tokens exhibit a steady increase in cosine similarity after layer 15, reaching an average of about $0.45$, while rejected tokens plateau near $0.35$. This divergence suggests that higher similarity correlates with token acceptance. We formalize this observation with the following lemma:

\begin{lemma}
Let $h_m, h_v \in \mathbb{R}^d$ be hidden states for the mask token and the next-true token after the last decoder layer and let $W \in \mathbb{R}^{d\times V}$ be the LM head with columns $w_r\in \mathbb{R}^{d}$. Assume $||h_m||_{2}, ||h_v||_{2} \leq c_h$ and $||w_r||_{2}\leq c_w , \forall \, r$. 
We define $i^* = \operatorname{argmax}_{r} w_r^Th_v$ as the next-next true token (under greedy sampling) and $S_m = \operatorname{argtopK}_{r} w_r^T h_m$ be the next top-K tokens under the mask token. Then, 
\begin{equation}
i^* \in S_m, \text{ if } cos(h_m, h_v) \geq \delta^* \text{ for some } \delta^* \in \mathbb{R}_{> 0}
\end{equation}
\end{lemma}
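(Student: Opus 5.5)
The plan is a perturbation argument on the logits. The key observation is that the ranking of $w_r^T h_m$ over $r$ is unchanged when $h_m$ is multiplied by a positive scalar, and the same holds for $h_v$. So $S_m$ and $i^*$ depend only on the unit vectors $u_m = h_m/\|h_m\|_2$ and $u_v = h_v/\|h_v\|_2$ (assuming $h_m, h_v \neq 0$). Working with unit vectors lets the cosine condition control everything directly. The bound $c_h$ then plays no essential role; it would only be needed if one insisted on working with unnormalized logits.

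\textbf{Step 1: turn the cosine condition into a logit bound.} Use the identity $\|u_m - u_v\|_2^2 = 2\bigl(1-\cos(h_m,h_v)\bigr)$. Combined with Cauchy--Schwarz and $\|w_r\|_2 \le c_w$, this gives, uniformly over all $r$,
\[
\bigl|w_r^T u_m - w_r^T u_v\bigr| \le c_w \sqrt{2(1-\cos(h_m,h_v))} =: \varepsilon .
\]

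\textbf{Step 2: introduce a top-$K$ margin.} Order the logits $w_r^T u_v$ in decreasing order, so $i^*$ is first. Let $\ell_{K+1}$ be the $(K{+}1)$-th largest value, and define the margin
\[
\gamma = w_{i^*}^T u_v - \ell_{K+1} \ge 0 .
\]
Consider any token $r$ outside the top-$K$ set under $u_v$. Step 1 gives
\[
w_r^T u_m - w_{i^*}^T u_m \le -\gamma + 2\varepsilon .
\]
So if $2\varepsilon < \gamma$, none of these tokens can overtake $i^*$ under the mask state. Only the at most $K-1$ tokens ranked $2,\dots,K$ under $u_v$ can possibly outrank $i^*$, and hence $i^*$ lies in $S_m$.

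\textbf{Step 3: solve for the threshold.} The condition $2\varepsilon < \gamma$ is equivalent to
\[
\cos(h_m,h_v) > 1 - \frac{\gamma^2}{8c_w^2}.
\]
Any $\delta^*$ strictly above this value and at most $1$ works. To land in $\mathbb{R}_{>0}$ as the statement requires, take $\delta^*$ to be the maximum of this value and some small positive constant; that only strengthens the hypothesis. Note that $\delta^*$ depends on the instance through $\gamma$, which matches the existential phrasing ``for some $\delta^*$''.

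\textbf{Main obstacle: the degenerate case $\gamma = 0$.} This happens when $i^*$ ties with the $(K{+}1)$-th token under $h_v$. Then no $\delta^*<1$ suffices, and even $\delta^*=1$, which forces $u_m = u_v$ and hence identical rankings, relies on ties being broken identically in $\operatorname{argmax}$ and $\operatorname{argtopK}$. I would handle this by adding an explicit margin assumption $\gamma \ge \gamma_0 > 0$, or a consistent tie-breaking convention. Under a margin assumption one gets the clean uniform threshold $\delta^* = \max\{1-\gamma_0^2/(8c_w^2),\,\eta\}$ for any fixed $\eta \in (0,1)$. I would state this explicitly, since it is what makes the lemma nonvacuous and is what connects it to the observed gap in cosine similarity between accepted and rejected tokens.
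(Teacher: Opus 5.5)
Your proof is correct, and it departs from the paper's at two points, both to your advantage.

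First, the paper does not normalize. It bounds $\|h_m-h_v\|_2$ directly by expanding $\|h_m\|^2+\|h_v\|^2-2h_m^\top h_v$ and using $h_m^\top h_v\ge\|h_m\|_2\|h_v\|_2\,\delta\ge c_h^2\delta$, which gives logit perturbations of size $\sqrt2\,c_wc_h\sqrt{1-\delta}$. That second inequality uses the upper bound $\|h\|_2\le c_h$ in the wrong direction. The resulting bound $\|h_m-h_v\|_2\le\sqrt2\,c_h\sqrt{1-\delta}$ is false in general: take $h_m=\tfrac12 h_v$, so $\cos=1$ but $h_m\neq h_v$. The paper's final threshold happens to remain valid, because its gap is computed on the unnormalized $h_v$ and divided by $c_h\ge\|h_v\|_2$, so it never exceeds the normalized gap. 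But the argument that actually justifies it is yours: $i^*$ and $S_m$ depend only on the directions $u_m,u_v$, and $\|u_m-u_v\|_2^2=2(1-\cos(h_m,h_v))$ holds exactly. This also shows that $c_h$ plays no role.

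Second, the paper's margin is $\Delta^v_{K^*}$, the gap from $i^*$ to the $K$-th largest logit under $h_v$, whereas yours is the gap $\gamma$ to the $(K{+}1)$-th. Both suffice, since every token outside the top-$K$ set sits at or below the $(K{+}1)$-th value, but yours is the sharp choice. The paper's threshold collapses to $\delta^*=1$ whenever the $K$-th logit ties the maximum, in particular always when $K=1$, as the paper itself remarks. Combined with its strict condition $\delta>\delta^*$, the paper's conclusion is then vacuous. Your threshold, by contrast, is nontrivial for $K=1$ as soon as the top two logits are separated.

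One small simplification: normalized logits lie in $[-c_w,c_w]$, so $\gamma\le 2c_w$ and $1-\gamma^2/(8c_w^2)\ge\tfrac12$. The threshold is therefore automatically positive, and taking a maximum with $\eta$ is unnecessary.
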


i.e., the next-next true token (under greedy sampling) belongs to the set of top-$K$ draft tokens generated from mask token when cosine similarity between next-true and mask token states exceed $\delta^*$ threshold.
The proof is provided in Appendix in \cref{subsec_appendix:why_online_ssd_works}. 

% \textbf{Higher cosine similarity between mask and valid token hidden states leads to a smaller upper bound on the difference between their top-K logits.}
% As cosine similarity $\delta \to 1$, this difference approaches zero, making multi-token prediction more accurate. (Full proof in Appendix \cref{supp_sec:intuition_online_ssd})

\subsection{Logit-based prediction and verification}
% \vspace{-0.25cm}
\begin{figure}  
\centering
\includegraphics[width=0.49\textwidth]{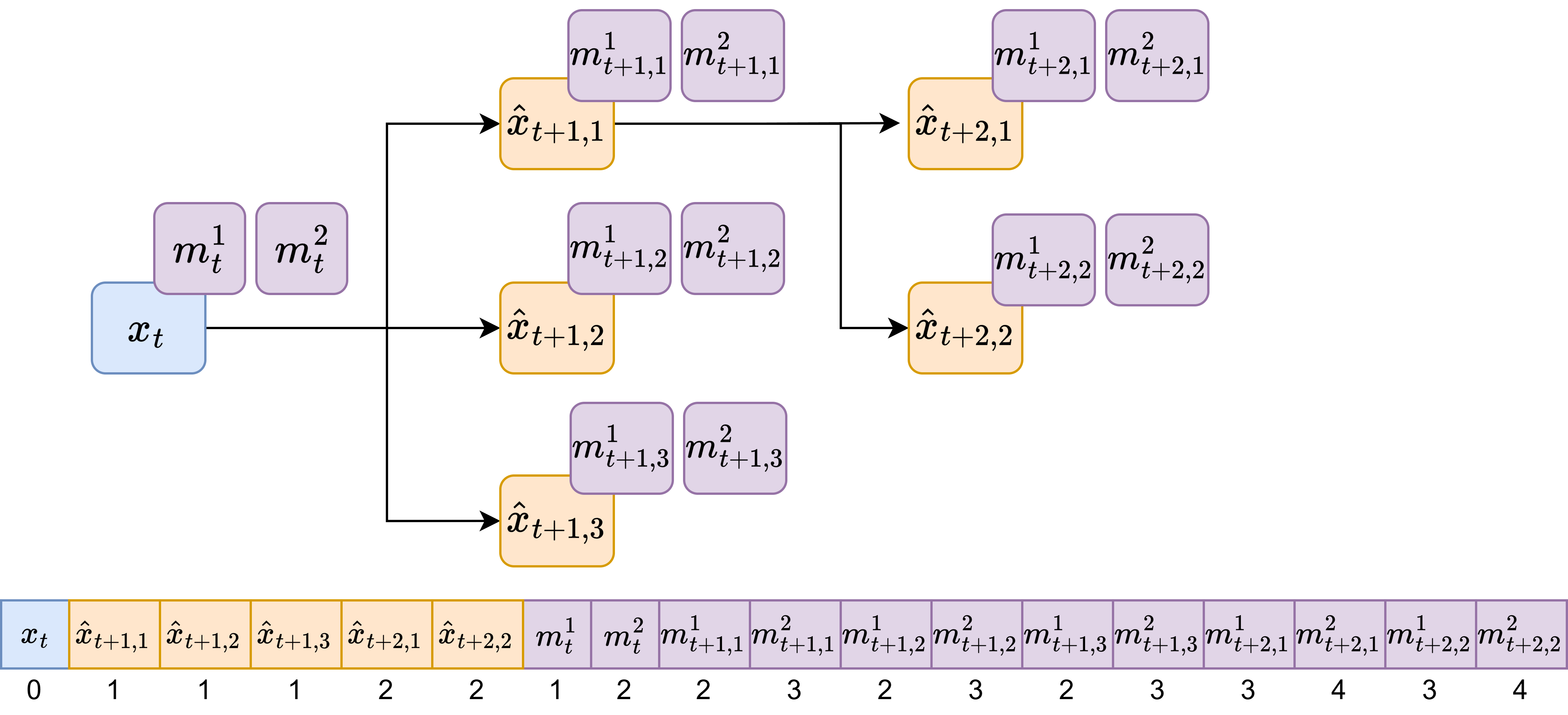}
  \caption{Mask tokens are present for each input token: last generated (blue) and future (orange) tokens, when processed by model all tokens are flattened and mask tokens are placed at the end with appropriate position indices.}
  \label{fig:mask_token}
  \vspace{-0.3cm}
\end{figure}

This section describes how ESP (i) proposes future tokens from mask-token logits, (ii) verifies those proposals using the base model, and (iii) controls compute through block complexity.

% The output logits of each mask token are used to sample future tokens conditioned on the past context. At each mask-token position, we sample Top-K tokens, which form the nodes of a speculative token tree.
% The number of sampled future tokens at all depths directly corresponds to the number of \emph{draft-tree nodes} (in speculative decoding) produced by our method. During prefill, we expand only the Top-$1$ token at each depth (details in Appendix~\ref{sec:supp_dynamic_tree}), which both respects a fixed computational budget and favors high-likelihood continuations.

At each mask-token position, ESP uses the output logits to propose future tokens conditioned on the current context. Specifically, we sample Top-K candidates from each mask-token distribution, and these candidates form the nodes of a speculative token tree. The total number of sampled candidates across all depths determines the size of the draft tree explored by ESP.

During prefill, ESP performs only generation. In the decode phase, it performs both generation and verification, as shown in \cref{fig:prelim_diagram}. We expand only the Top-1 token at each depth (details in Appendix \cref{sec:supp_dynamic_tree}), which keeps computation bounded while favoring high-likelihood continuations.

% After prefill, the generation stage performs \textbf{parallel verification and generation} (Fig.~\ref{fig:prelim_diagram}). Each predicted token is verified by comparing it with the base model’s next-token distribution, and is accepted only if it matches exactly, ensuring \textbf{lossless generation}. Once a token is accepted, we use the mask token(s) aligned with its position to generate the next set of future predictions. Each mask-token pair is associated with both the last accepted token and predicted future positions via the tree-attention mask (Fig.~\ref{fig:mask_token}).

Verification is lossless: each predicted token is checked against the base model’s own next-token prediction and is accepted only if it matches exactly. Once a token is accepted, the corresponding mask token(s), aligned to that accepted position through the tree-attention mask, are used to generate the next set of future-token proposals (Fig. 3). In this way, ESP alternates between proposing multiple future continuations and verifying them with the same frozen model.

% Training-free multi-token prediction methods must verify a bundle of tokens in a single forward pass, which can become compute-bound when this bundle grows large. We therefore define \textbf{block complexity} as the total number of input tokens processed in parallel by the model in one forward pass. Since verification evaluates all nodes of the speculative draft tree simultaneously, the block complexity is exactly the number of draft-tree nodes (future predicted tokens) plus their associated mask tokens.

Because verification evaluates a bundle of speculative tokens in a single forward pass, the computation can become expensive when the bundle grows large. We therefore define block complexity as the total number of tokens processed in parallel in one forward pass. This includes: (i) the last accepted token, (ii) all draft-tree nodes corresponding to predicted future tokens, and (iii) the associated mask tokens used to probe those future positions.

% For example, with two mask tokens per future step and Top-$K_{1}$ and Top-$K_{2}$ samples at depths 1 and 2, respectively, the block complexity is:
% \[
%     3\big(1 + K_{1} + K_{2}\big),
% \]
% accounting for one last-accepted token, $K_{1}{+}K_{2}$ predicted tokens, and $2(1{+}K_{1}{+}K_{2})$ mask tokens. In general, with $k$ mask tokens and sample sizes $K_i$ at depth $i$, the block complexity is:
% \begin{equation}
%     \text{Block Complexity} = (k+1)\left(1 + \sum_{i=1}^{k} K_{i}\right).
% \end{equation}
% We compare all baselines under matched block complexity, since larger trees increase acceptance rates but also raise latency and compute usage.

Example (two mask tokens, depth-two tree). Suppose we use two mask tokens per future step and sample K1 and K2 tokens at depths 1 and 2, respectively. Then the block contains: 1 last-accepted token, $K_1+ K_2$ predicted future tokens,
and, $2(1 + K_1 + K_2)$ mask tokens.
Therefore, the block complexity is:
\[
  \text{Block Complexity} =  3\big(1 + K_{1} + K_{2}\big),
\]

In the general case, with k mask tokens and sample sizes Ki at depth i, the block complexity is:
\begin{equation}
    \text{Block Complexity} = (k+1)\left(1 + \sum_{i=1}^{k} K_{i}\right).
\end{equation}

We compare all methods under matched block complexity, since larger trees can improve acceptance but also increase latency and compute cost.

\subsection{Dynamic Tree Construction}
% \vspace{-0.25cm}
Constructing a tree of future token predictions typically requires selecting a fixed Top-$K$ from each mask token’s logits. However, this approach is brittle and task-dependent, often requiring extensive tuning across models and domains. Instead, we propose a dynamic draft tree construction method that adapts to the model’s uncertainty by using cumulative probability to prioritize high-likelihood future token trajectories.

Our tree follows a \textbf{Top-1} expansion strategy, where only the highest-probability token at each depth is expanded to form child nodes. This design simplifies the tree while preserving efficiency, as illustrated in Appendix \cref{fig:dynamic_tree_expansion}. We leave exploration of more complex tree structures to future work.
% At each depth of the tree (corresponding to a mask token), we begin with the Top-1 token and expand it by attaching additional tokens sampled from next mask token and storing the cumulative probability. 
After computing cumulative probabilities for all token trajectories, we select the Top-$B-1$ trajectories, where $B$ is the block complexity (including the last generated token).
This allows the tree to grow adaptively: more branches are explored when the model is uncertain, and fewer when it is confident.

Our algorithm, shown in \cref{algorithm:draft_token_tree_v2}, takes as input the block complexity (budget) and the number of mask tokens (tree depth), and outputs a set of token trajectories that maximize coverage while respecting the computational budget. This avoids exhaustive grid search over tree branch (Top-$K$) and and ensures that the tree structure adapts to the model’s predicted probabilities rather than relying on fixed heuristics. 
% Our algorithm is shown in \cref{algorithm:draft_token_tree_v2}. 
We show in \cref{sec:results} that dynamic tree generation performs on-par or better than hand-crafted tree branches.  
% \begin{algorithm}[t]
% \caption{Dynamic Generation of Token Tree}
% \label{algorithm:draft_token_tree_v2}
% \begin{algorithmic}[1]
% \State \textbf{Input:} LLM model $M$, Budget $B$, Mask token count $k$ and logits $l_{m_{1:k}}$, Input context $x$
% \State \textbf{Output:} Draft token tree $T$ with $B$ nodes
% \State Initialize tree $T$ with root node $r$, $P(r) \gets 1.0$ \Comment{Probability of root node is 1}
% \State $\text{nodes} \gets \{r\}$, $\text{all\_candidates} \gets \emptyset$
% \For{$i = 1$ to $k$}
%     \State $\text{new\_candidates} \gets \emptyset$
%     \ForAll{node $n \in \text{nodes}$ at depth $i-1$}
%         \State Get mask token logits $l_n$ from model $M$ given path of $n$
%         \State Sample top-$(B-i)$ tokens $\{t_1, \ldots, t_{B-i}\}$ from $l_n$
%         \ForAll{token $t_j$ in sampled tokens}
%             \State Create child node $c$ with token $t_j$ appended to path of $n$
%             \State $P(c) \gets P(n) \cdot P(t_j|l_n)$ \Comment{update cumulative probability}
%             \State Add $c$ to $\text{new\_candidates}$
%         \EndFor
%     \EndFor
%     \State $\text{all\_candidates} \gets \text{all\_candidates} \cup \text{new\_candidates}$
%     \If{$i < k$}
%         \State Sort $\text{all\_candidates}$ by probability, take top $(B-i)$ as $\text{nodes}$
%         \State $\text{all\_candidates} \gets \text{all\_candidates} \setminus \text{nodes}$
%     \EndIf
% \EndFor
% \State Add top $(B-1)$ nodes from $\text{all\_candidates}$ to tree $T$
% \State \textbf{Return} $T$
% \end{algorithmic}
% \end{algorithm}

\begin{algorithm}[t]
\caption{Dynamic Generation of Token Tree}
\label{algorithm:draft_token_tree_v2}
\begin{algorithmic}[1]
\State \textbf{Input:} LLM $M$, Budget $B$, Mask count $k$, Logits $l_{m_{1:k}}$, Context $x$
\State \textbf{Output:} Draft token tree $\mathcal{T}$ with $B$ nodes
\State Initialize $\mathcal{T}$ with root $r$, $P(r) \gets 1.0$ \Comment{Root probability}
\State $\mathcal{N} \gets \{r\}$, $\mathcal{C} \gets \emptyset$ \Comment{$\mathcal{N}$: nodes, $\mathcal{C}$: candidates}
\For{$i = 1$ to $k$}
\State $\mathcal{C}_{\text{new}} \gets \emptyset$
\ForAll{$n \in \mathcal{N}$ at depth $i-1$}
\State Get logits $l_n$ from $M$ given path of $n$
\State Sample top-$(B-i)$ tokens $\{t_1, \ldots, t_{B-i}\}$ from $l_n$
\ForAll{token $t_j$ in sampled tokens}
\State Create child $c$ with token $t_j$ appended to path of $n$
\State $P(c) \gets P(n) \cdot P(t_j|l_n)$ \Comment{Update probability}
\State $\mathcal{C}_{\text{new}} \gets \mathcal{C}_{\text{new}} \cup \{c\}$
\EndFor
\EndFor
\State $\mathcal{C} \gets \mathcal{C} \cup \mathcal{C}_{\text{new}}$
\If{$i < k$}
\State $\mathcal{N} \gets \text{top}_{B-i}(\mathcal{C})$ \Comment{Sort by probability}
\State $\mathcal{C} \gets \mathcal{C} \setminus \mathcal{N}$
\EndIf
\EndFor
\State Add top $(B-1)$ nodes from $\mathcal{C}$ to tree $\mathcal{T}$
\State \textbf{Return} $\mathcal{T}$
\end{algorithmic}
\end{algorithm}

\subsection{Tree Pruning}
% \vspace{-0.25cm}
% When generating the future token tree, we prune tokens as we go deeper into the treee using simple heuristics. One such heuristic is removing consecutive repeated token, eg. if the parent node is `the' and the child node Top-$K$ token also consists of `the', then we remove it and sample the $K+1$ token. A general version of this uses token removal based on substrings (double check if this needs to be mentioned, seems extra). 
To reduce redundancy during tree expansion, we apply a simple pruning heuristic which removes consecutive repeated tokens—for example, when a child node predicts the same token as its parent (e.g., parent = "the", child = "the"). We observe that mask token predictions often include the last generated token or the parent token, which is typically redundant. To address this, we replace such token(s) with the next best token candidate from the mask token output distribution.

We perform ablation on using tree pruner and provide observations in Appendix \cref{subsec:impact_of_tree_pruner} that tree pruner helps improve avg token acceptance by up to $4\%$.
% In future, one can optionally extend this to substring-based pruning, though this may be task-dependent and is left as an optional heuristic.

% \vspace{-0.4cm}
\section{Experiments}
\label{sec:results}
To evaluate the efficacy of ESP, we conduct rigorous experiments using  latest open-source frontier models: (a) \texttt{LLaMA3}, and (b) \texttt{Qwen3}, 
% on the \textbf{Speculative Decoding Benchmark (Spec-Bench), \citet{xia-etal-2024-unlocking}}. 
We use sample matching. where a token is accepted only if it is an exact match, enabling \textbf{lossless generation}.
% we observe that this scheme provided highest BE (\cref{tab:mask_design_combined}).

\textbf{Models}: We evaluate two LLaMA3 models— \texttt{LLaMA3.2-3B-Instruct} and  \texttt{LLaMA3.1-8B-Instruct}—and two Qwen3 models—\texttt{Qwen3-8B} and \texttt{Qwen3-32B}—to demonstrate that our method generalizes across architectures and scales. All models are run with a maximum generation length of 100 tokens on a single NVIDIA A100 GPU. We use temperature=$0.0, 1.0$ with temperature=$1.0$ results provided in the Appendix \cref{subsec:BE_with_TM_sampling}.

\textbf{Tasks}: We use tasks from \textbf{SpecBench} \citet{xia-etal-2024-unlocking}, which includes summarization, translation, writing, coding, retrieval and math tasks (from GSM8K \citet{cobbe2021training}). 

\textbf{Baselines}: We compare against training-free and draft-free speculative decoding methods: (i) \textbf{Prompt Lookup Decoding (PLD)} \cite{saxena2023prompt}, (ii) \textbf{Stochastic Adaptive N-gram Drafting (STAND)}  \cite{song2025accelerated}, and (iii) \textbf{Lookahead Decoding (LADE)} \cite{fu2024break}. The configuration for baseline methods is mentioned in Appendix \cref{sec:baseline_configs} based on their respective papers.  

% At higher BCs, STAND constructs a token tree using candidate logits offline [cite]; in contrast, we perform online tree generation via our dynamic tree expansion conditioned on the current input prompt, yielding potentially better results. PLD operates only at BC = 10, as reported in [cite]; we observe that increasing future token length beyond this does not improve PLD's block efficiency (BE).
% Both STAND and LADE require additional memory—STAND stores output logits, while LADE maintains an N-gram token cache. Our method avoids such overhead. 
% We omit Jacobi Decoding \cite{} as it consistently  underperforms relative to LADE.

\textbf{Performance Metric}: 
 We report  \textbf{average acceptance length ($\tau$)}, defined as the average number of tokens accepted and the bonus token per model call. $\tau$ directly reflects the reduction in model calls: model calls $\propto \frac{1}{\tau}$, thus, \textbf{higher $\tau$ implies fewer model calls} and lower compute (energy) cost. We also report \textbf{Speedup Ratio (S/R)} to show the absolute wall-time on H100 GPUs.
 % We also report \textbf{Memory-Bound Speed-Up (MBSU)}, a theoretical speed-up based on BE, draft model size, and base model size. Since our method is draft-free, MBSU equals BE.
 
% We compare the Block Efficiency (BE) for all the methods, which is the average number of tokens generated per model run (also defined as $1 +$ acceptance rate). Note that Block Efficiency is also a direct measure on the reduction of number of model calls-- the number of \textbf{model calls are reduced by factor of 1/BE}. Therefore, higher the block efficiency, lower the number of model calls, thus saving compute. 
% Another common metric for speculative decoding methods is memory-bound speed-up (MBSU), which is the theoretical speed-up achieved based on the block efficiency, draft model size, base model size. Our setting is drafter-free, so MBSU will be same as BE.

\textbf{Block Complexity (BC)}: Number of draft tree nodes; three block complexities: $10, 30, 60$. 
% For each baseline, we use the best configurations reported in their respective papers.

\textbf{Mask token design} in ESP is based on mean of given prompt's embedding (soft initialization) with dynamic updates based on the last token generated following \cref{eq:mask_update_mean}, with $\lambda=0.1$. We use single mask token for BC=10,30 and two mask tokens for BC=60, unless otherwise stated.
% \section{Results}

% \begin{table}[h]
% \centering
% \caption{Comparison of multi-token prediction \textbf{block efficiency} across models and methods averaged on Spec-bench tasks for block complexity $BC=30$ and $BC=60$}
% \resizebox{\textwidth}{!}{%
% \begin{tabular}{lcccccccc}
% \toprule
% \textbf{Model} & \multicolumn{4}{c}{\textbf{BC=30}} & \multicolumn{4}{c}{\textbf{BC=60}} \\
% \cmidrule(lr){2-5} \cmidrule(lr){6-9}
%  & \textbf{PLD} & \textbf{STAND} & \textbf{LADE} & \textbf{OURS} & \textbf{PLD} & \textbf{STAND} & \textbf{LADE} & \textbf{OURS} \\
% \midrule
% \texttt{LLaMA3.2-3B-Instruct} & 1.38 & 1.42 & 1.41 & \textbf{1.59} & 1.38 & 1.47 & 1.49 & \textbf{1.67} \\
% \texttt{LLaMA3.1-8B-Instruct} & 1.30 & 1.38 & 1.38 & \textbf{1.62} & 1.30 & 1.43 & 1.51 & \textbf{1.71} \\
% \texttt{Qwen3-8B}             & 1.24 & 1.32 & 1.34 & \textbf{1.56} & 1.24 & 1.35 & 1.46 & \textbf{1.66} \\
% \texttt{Qwen3-32B}            & 1.27 & 1.32 & 1.33 & \textbf{1.54} & 1.27 & 1.37 & 1.45 & \textbf{1.65} \\
% \bottomrule
% \end{tabular}%
% }
% \label{tab:bc_summary_combined_BE}
% \end{table}

\subsection{Results}
% \input{tables/main_results_table}
%% UPDATED TO 256 LENGTH 
\begin{table*}[h]
\centering
\caption{Comparison of multi-token prediction \textbf{average acceptance length} ($\tau$) and \textbf{speedup ratio} (S/R) over autoregressive decoding across models and methods averaged on Spec-bench tasks for block complexity $BC=30$ and $BC=60$. All results on a single NVIDIA H100 GPU with 256-token outputs. PLD uses a fixed depth-10 tree (BC$\approx$10) regardless of budget; its values are shown for reference under both BC columns.}
\resizebox{\textwidth}{!}{%
\begin{tabular}{lcccccccccccccccc}
\toprule
\textbf{Model} & \multicolumn{8}{c}{\textbf{BC=30}} & \multicolumn{8}{c}{\textbf{BC=60}} \\
\cmidrule(lr){2-9} \cmidrule(lr){10-17}
 & \multicolumn{2}{c}{\textbf{PLD}} & \multicolumn{2}{c}{\textbf{STAND}} & \multicolumn{2}{c}{\textbf{LADE}} & \multicolumn{2}{c}{\textbf{ESP}} & \multicolumn{2}{c}{\textbf{PLD}} & \multicolumn{2}{c}{\textbf{STAND}} & \multicolumn{2}{c}{\textbf{LADE}} & \multicolumn{2}{c}{\textbf{ESP}} \\
 & $\tau$ & S/R & $\tau$ & S/R & $\tau$ & S/R & $\tau$ & S/R & $\tau$ & S/R & $\tau$ & S/R & $\tau$ & S/R & $\tau$ & S/R \\
\midrule
\texttt{LLaMA3.2-3B-Instruct} & 1.43 & 1.19$\times$ & 1.55 & 1.01$\times$ & 1.43 & 1.01$\times$ & \textbf{1.56} & \textbf{1.22$\times$} & 1.43 & 1.19$\times$ & 1.62 & 1.07$\times$ & 1.57 & 1.10$\times$ & \textbf{1.63} & \textbf{1.22$\times$} \\
\texttt{LLaMA3.1-8B-Instruct} & 1.44 & 1.23$\times$ & 1.58 & 1.10$\times$ & 1.45 & 1.06$\times$ & \textbf{1.63} & \textbf{1.35$\times$} & 1.44 & 1.23$\times$ & 1.64 & 1.14$\times$ & 1.60 & 1.14$\times$ & \textbf{1.71} & \textbf{1.38$\times$} \\
\texttt{Qwen3-8B}             & 1.31 & 1.12$\times$ & 1.42 & 1.02$\times$ & 1.55 & 1.12$\times$ & \textbf{1.63} & \textbf{1.37$\times$} & 1.31 & 1.12$\times$ & 1.48 & 1.06$\times$ & 1.73 & 1.21$\times$ & \textbf{1.74} & \textbf{1.43$\times$} \\
\texttt{Qwen3-32B}            & 1.29 & 1.09$\times$ & 1.42 & 1.10$\times$ & 1.52 & 1.22$\times$ & \textbf{1.60} & \textbf{1.43$\times$} & 1.29 & 1.09$\times$ & 1.48 & 1.13$\times$ & 1.69 & 1.31$\times$ & \textbf{1.70} & \textbf{1.48$\times$} \\
\bottomrule
\end{tabular}%
}
\label{tab:bc_summary_combined_BE}
\end{table*}

% We begin by reporting the average accepted tokens ($\tau$) and tokens per second (T/S) of various methods on Spec-Bench tasks for two block complexities: \textbf{BC = 30 and BC = 60}. As shown in \cref{tab:bc_summary_combined_BE}, our method consistently outperforms existing baselines, achieving up to $\mathbf{12\%}$ \textbf{higher $\tau$} on \texttt{LLaMA3} models and $\mathbf{8–12\%}$ \textbf{gains} on \texttt{Qwen3} models over STAND, LADE which are SOTA training-free baselines. This translates to a substantial reduction in the number of forward model calls, up to $\mathbf{40\%}$ \textbf{fewer model invocations} at BC = 30 and 60, as detailed in Appendix \cref{subsec:reduction_model_calls} in \cref{tab:bc_summary_combined_reduction_model_call} and higher token rates. Our method also gives best token rates, improving LADE by upto $\sim 15\%$ and $~19\%$ for \texttt{LLaMA3.2-3B-Instruct} and \texttt{LLaMA3.1-8B-Instruct} respectively. Notably, our method achieves these gains \textbf{without relying on any auxiliary N-gram cache}.
%% UPDATED
We begin by reporting the average accepted tokens ($\tau$) and speedup ratio (S/R) over autoregressive decoding on Spec-Bench for two block complexities: BC = 30 and BC = 60. Across all models and budgets, ESP delivers the strongest overall speed–quality trade-off, achieving the best or tied-best throughput while maintaining lossless generation. 

As shown in \cref{tab:bc_summary_combined_BE}, our method (ESP) consistently outperforms existing baselines, achieving up to $\mathbf{12\%}$ \textbf{higher $\tau$ than LADE} on \texttt{LLaMA3} models and $\mathbf{13\text{--}18\%}$ \textbf{higher $\tau$ than STAND} on \texttt{Qwen3} models. This translates to a substantial reduction in the number of forward model calls, up to $\mathbf{42\%}$ \textbf{fewer model invocations} at BC = 60, as detailed in Appendix \cref{subsec:reduction_model_calls} in \cref{tab:bc_summary_combined_reduction_model_call}. ESP achieves the best speedup ratios across all models and budgets, outperforming the next-best baseline by up to $\sim 12\%$ on \texttt{LLaMA3.1-8B-Instruct} (over PLD) and $13\text{--}22\%$ on \texttt{Qwen3} models (over LADE), while crucially \textbf{not relying on any auxiliary N-gram cache}. These gains stem from ESP’s ability to generate higher-quality speculative continuations directly from the base model, which leads to longer accepted token chains during verification and therefore fewer model invocations overall.

\begin{figure*}[h]
    \centering
    \includegraphics[width=0.95\textwidth]{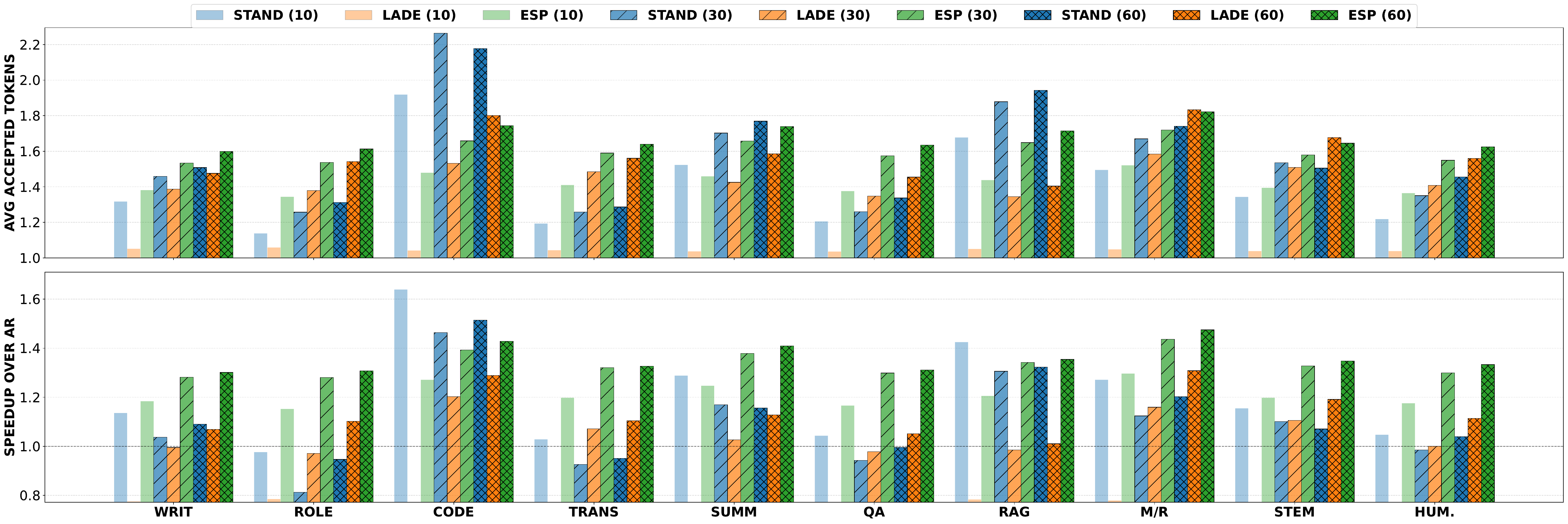}
    \caption{Evaluation on Spec-Bench using  \texttt{LLaMA3.1-8B-Instruct} across block complexities (BC = 10, 30, 60). Our method (green) consistently achieves the highest average accepted tokens across most tasks and BC settings. WRIT=writing, ROLE=roleplay, CODE=coding, TRANS=translation, SUMM=summarization, M/R=math/reasoning, HUM.=humanities}
    \label{fig:llama3.1-8B-comprehensive}
    \vspace{-0.3cm}
\end{figure*}

% [htbp]
% \begin{figure*}
%     \centering
%     \includegraphics[width=0.95\textwidth]{images/qwen3-32b_comprehensive_acceptance_comparison.pdf}
%     \caption{Evaluation on SpecBench using \texttt{Qwen3-32B} across block complexities (BC = 10, 30, 60). Our method (green) consistently achieves the highest average accepted token across most tasks and BC settings.}
%     \label{fig:qwen3-32b-comprehensive}
%     % \vspace{-0.5cm}
% \end{figure*}

\begin{figure*}
    \centering
    \includegraphics[width=0.95\textwidth]{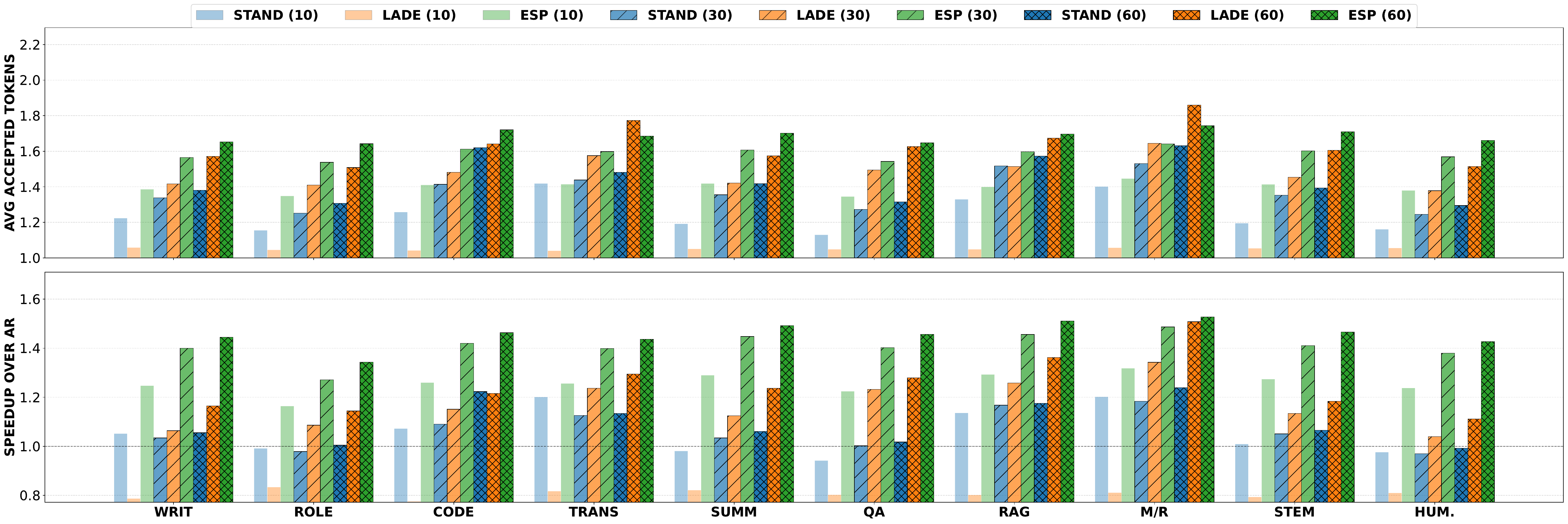}
    \caption{Evaluation on SpecBench using \texttt{Qwen3-32B} across block complexities (BC = 10, 30, 60). Our method (green) consistently achieves the highest average accepted token across most tasks and BC settings.}
    \label{fig:qwen3-32b-comprehensive}
    % \vspace{-0.5cm}
\end{figure*}

We further present comprehensive average tokens accepted and speed-up ratio (S/R) results across all downstream tasks and block complexities (BC = 10, 30, 60) for \texttt{LLaMA3.1-8B-Instruct} and \texttt{Qwen3-32B}, shown in \cref{fig:llama3.1-8B-comprehensive} and \cref{fig:qwen3-32b-comprehensive}. Each method is color-coded, with higher opacity indicating larger block complexity. Our method, ESP (green) consistently achieves the highest $\tau$ and speedup across most tasks and BC settings, demonstrating that \textbf{LLMs, when probed effectively, can predict future tokens across diverse tasks}. For \texttt{LLaMA3.1-8B-Instruct}, STAND (blue) performs second-best on coding, RAG, and summarization — tasks with high n-gram repetition where tokens can be directly copied from the prompt — while LADE (orange) is second-best on translation and math/reasoning. A similar task-dependent pattern is observed for \texttt{Qwen3-32B}, where LADE increasingly competes with STAND as BC grows.

For \texttt{LLaMA3.1-8B-Instruct}, the coding task yields the highest overall $\tau$ (e.g., STAND BC=30: $\tau$=2.27), driven by the n-gram copy advantage, while our method achieves its peak on math/reasoning ($\tau$=1.81, $1.43\times$ speedup at BC=60). For \texttt{Qwen3-32B}, math/reasoning yields the highest $\tau$ gains across all methods, with our method reaching $\tau$=1.75 ($1.53\times$ speedup at BC=60). Importantly, our method performs well even at \textbf{low BC} (avg $1.22\times$ at BC=10 for \texttt{LLaMA3.1-8B-Instruct} vs $1.23\times$ for STAND), making it suitable for \textbf{edge devices} where compute constraints limit block size. Exceptions include coding, RAG, and summarization on \texttt{LLaMA3.1-8B-Instruct}, where STAND outperforms our method due to its n-gram copy advantage. Additional block efficiency results on \texttt{LLaMA3.2-3B-Instruct} and \texttt{Qwen3-8B} at BC=60 are shown in Appendix \cref{subsec:BE_on_more_models}, \cref{fig:acceptance-comparison-llama3b-qwen8b}. Qualitative results of ESP are shown in Appendix \cref{subsec:qualatative_result}.

\begin{tcolorbox}[colback=gray!10, colframe=black, boxrule=0.5pt, arc=2pt, left=4pt, right=4pt, top=4pt, bottom=4pt]
% \small
\textbf{Key takeaway}: at small block complexities (e.g., BC = 10, 30), ESP with a single mask token already surpasses existing baselines by a clear margin. This supports the hypothesis that frozen LLMs, when probed appropriately, can reliably predict additional future tokens with minimal overhead
\end{tcolorbox}

\subsection{Dynamic Tree Expansion and number of mask tokens to probe}
\begin{table*}[h]
\centering
\caption{Average acceptance length ($\tau$) for BC$=30$ and BC$=60$ for \texttt{LLaMA3.2-3B}/\texttt{LLaMA3.1-8B-Instruct} with two mask tokens ($m_{1},m_{2}$) across different branch configurations. Dynamic uses adaptive branching per input; static configs use fixed trees. The best $\tau$ is \textbf{bold}; second best is \underline{underlined}.}
\resizebox{\textwidth}{!}{%
\begin{tabular}{lcccccccccc}
\toprule
& \multicolumn{4}{c}{\textbf{BC=30} ($m_{1}, m_{2}$)} & \multicolumn{5}{c}{\textbf{BC=60} ($m_{1}, m_{2}$)} \\
\cmidrule(lr){2-5} \cmidrule(lr){6-10}
\textbf{Model} & dynamic & [7,2] & [5,4] & [3,6] & dynamic & [15,4] & [12,7] & [10,9] & [8,11] & [6,13] \\
\midrule
\texttt{LLaMA3.2-3B-Instruct} & \textbf{1.506} & 1.504 & 1.476 & 1.417 & \textbf{1.630} & \textbf{1.631} & 1.620 & 1.602 & 1.574 & 1.540 \\
\texttt{LLaMA3.1-8B-Instruct} & \textbf{1.571} & 1.570 & 1.540 & 1.479 & \textbf{1.712} & 1.708 & 1.701 & 1.687 & 1.665 & 1.624 \\
\bottomrule
\end{tabular}%
}
\label{tab:dynamic_tree_ablation_combined}
\end{table*}

% \vspace{-0.5cm}
\begin{table}[h]
\centering
\caption{Impact of number of mask tokens at BC=60. Branch widths are set in decreasing order across mask tokens. \textbf{Bold} = best $\tau$.}
\resizebox{0.5\textwidth}{!}{%
\begin{tabular}{lccc}
\toprule
\textbf{Model}                & \small{($m_1$)} & \small{($m_1,m_2$)} & \small{($m_1,m_2,m_3$)} 
               \\
& $[29]$ & $[15,4]$ & $[7,5,3]$ \\               
\midrule
\texttt{LLaMA3.2-3B-Instruct} & \textbf{1.65} & 1.63 & 1.51 \\
\texttt{LLaMA3.1-8B-Instruct} & \textbf{1.73} & 1.71 & 1.57 \\
\bottomrule
\end{tabular}%
}
\label{tab:num_mask_ablation}
\end{table}

To evaluate the effectiveness of our dynamic tree expansion, we perform an ablation study comparing different branching strategies when using two mask tokens ($m_1, m_2$). Results for BC = 30 and BC = 60 are shown in \cref{tab:dynamic_tree_ablation_combined}. We observe that dynamic branching consistently performs best for both BC=30 and BC=60 avoiding the need for optimal branch configuration search.

The size and structure of tree width (branches) are constrained by the number of mask tokens and the block complexity. For a single mask token, we use the maximum possible tree branch since only one future token is predicted. For example, with BC = 10 and 30, the tree branches are $[4]$ and $[14]$, respectively—no dynamic branching is needed in this case. For two mask tokens, multiple branching configurations are possible. For BC = 30, the tree can branch as $[9-i, i]$ for $i \in {1, \dots, 8}$, allowing up to 9 branches. Similarly, for BC = 60, the tree can branch as $[19-i, i]$ for $i \in {1, \dots, 18}$. Our dynamic tree expansion avoids exhaustive search over these configurations by adaptively selecting the optimal branching based on the input prompt, yielding strong BE performance.

Importantly, the number of tree branches tends to be smaller when more mask tokens are used, due to the need to respect the block complexity constraint. This trade-off can impact average accepted tokens, and our dynamic expansion strategy helps navigate it effectively.

We also ablate over number of mask tokens, since these models are trained for predicting only the next token, increasing the number of mask tokens can lead to diminishing returns. In \cref{tab:num_mask_ablation} we compare different numbers of mask tokens with fixed BC=60 for \texttt{Llama3} models. We observe that one and two mask configuration perform better than three mask configuration. 

% For BC = 30, the best performance is achieved with a single mask token, which does not require dynamic branching. In contrast, when using two mask tokens, dynamic branching consistently ranks either first or second, demonstrating its utility in larger tree configurations. For BC = 60, dynamic branching with two mask tokens yields the better block efficiency across both \texttt{LLaMA3.2-3B}/\texttt{LLaMA3.1-8B-Instruct} models.

% This suggests that an MTP system using our approach can prefer a single mask token for lower block complexity, where simpler trees suffice, and leverage dynamic branching with two mask tokens for higher block complexity, where deeper prediction trees are beneficial. 
In Appendix \cref{subsec:num_mask_tokens}, we provide a comprehensive table comparing $\tau$ across tasks for both configurations. We observe that open-ended tasks (e.g., writing, reasoning) tend to perform better with a single mask token—benefiting from longer future token sequences and wider tree (greater exploration)—while closed-ended tasks (e.g., translation, math) perform better with two mask tokens deeper and less-wide tree (more focused and efficient exploitation of the model’s predictions).

\subsection{Efficient tree attention mask and position index construction}
% \vspace{-0.25cm}
% \begin{table}[H]
% \centering
% \caption{Token per second for ours (MTP) with \textbf{naive} and \textbf{efficient} implementation. Percentage improvements are relative to \textbf{lookahead decoding}.}
% \begin{tabular}{l@{\hskip 3pt}cc}
% \toprule
% \textbf{Model} & \textbf{Naive} & \textbf{Efficient} \\
% \midrule
% {\small \texttt{LLaMA3.2-3B-Instruct}} & 41.9 & 45.6 \textbf{(+15\%)} \\
% {\small \texttt{LLaMA3.1-8B-Instruct}} & 34.0 & 40.5 \textbf{(+14\%)} \\
% \bottomrule
% \end{tabular}
% \label{tab:base_token_rate}
% \vspace{-5pt}
% \end{table}
% \input{tables/naive_vs_eff_speed_up}
\begin{table}[h]
\centering
\caption{Speedup over AR for naive and efficient implementations across mask configurations single and two mask tokens. For two mask tokens ($m_1, m_2$): $[7,2]$ for BC=30 and $[15,4]$ for BC=60.}
\resizebox{0.48\textwidth}{!}{%
\begin{tabular}{lcccc}
\toprule
\textbf{Method} & $\mathbf{m_{1}}$\textbf{(30)} & $\mathbf{m_{1},m_{2}}$\textbf{(30)} & $\mathbf{m_{1}}$\textbf{(60)} & $\mathbf{m_{1},m_{2}}$\textbf{(60)} \\
\midrule
\multicolumn{5}{c}{\texttt{LLaMA3.2-3B-Instruct}} \\
\midrule
$\tau$      & 1.56 & 1.50 & 1.65 & 1.63 \\
Naive       & 0.99$\times$ & 0.99$\times$ & 0.95$\times$ & 0.96$\times$ \\
Efficient   & \textbf{1.22$\times$} & 1.08$\times$ & 1.20$\times$ & \textbf{1.22$\times$} \\
\midrule
\multicolumn{5}{c}{\texttt{LLaMA3.1-8B-Instruct}} \\
\midrule
$\tau$      & 1.63 & 1.57 & 1.73 & 1.71 \\
Naive       & 1.14$\times$ & 1.08$\times$ & 1.05$\times$ & 1.07$\times$ \\
Efficient   & \textbf{1.35$\times$} & 1.16$\times$ & 1.36$\times$ & \textbf{1.38$\times$} \\
\bottomrule
\end{tabular}%
}
\label{tab:efficient_attn_combined}
\end{table}

Tree-based decoding requires attention masks and position indices that respect branching hierarchies, which traditionally involves sequential iteration over tree nodes — a process that is \textbf{not GPU-friendly} and incurs high latency. To address this, we implement an \textbf{efficient strategy} that caches the attention mask and incrementally appends columns as new tokens are accepted, avoiding recomputation. Similarly, position indices are updated via a simple offset, enabling fast reuse across generation steps. These optimizations are detailed in Appendix  \cref{sec:efficient_mask}.

% This approach significantly improves throughput for fixed tree structures. As shown in \cref{tab:efficient_attn_combined}, our efficient implementation consistently improves throughput across all configurations, with average gains of $\sim$15\% for \texttt{LLaMA3.2-3B-Instruct} and $\sim$14\% for \texttt{LLaMA3.1-8B-Instruct} over the naive implementation. Gains are especially pronounced at higher block complexities, with improvements of $19–28\%$ for \texttt{LLaMA3.1-8B-Instruct} and up to $20\%$ for \texttt{LLaMA3.2-3B-Instruct} at BC = 60 (\cref{tab:efficient_attn_combined}).
%% UPDATED 
This approach significantly improves throughput for fixed tree structures. As shown in \cref{tab:efficient_attn_combined}, our efficient implementation consistently improves throughput across all configurations, with average gains of $\sim$22\% for \texttt{LLaMA3.2-3B-Instruct} and $\sim$21\% for \texttt{LLaMA3.1-8B-Instruct} over the naive implementation. Gains are especially pronounced at higher block complexities, with improvements of 29--30\% for \texttt{LLaMA3.1-8B-Instruct} and 27--28\% for \texttt{LLaMA3.2-3B-Instruct} at BC = 60 (\cref{tab:efficient_attn_combined}).

\subsection{Different mask embedding designs}
\vspace{-0.25cm}

%% NEW
\begin{table}[h]
\centering
\caption{ $\tau$ for different mask token initializations for LLaMA3.2-3B-Instruct and LLaMA3.1-8B-Instruct. Last K sets $m_i = \mathbf{e}_{t-k-i}$, Sample draws $m_i \sim \mathcal{N}(\mu, \sigma^2 I)$ where $\mu, \sigma$ are the mean and standard deviation of the full embedding table, and Mean (used in our main method).}
\resizebox{0.5\textwidth}{!}{%
\begin{tabular}{lccc|ccc}
\toprule
& \multicolumn{3}{c|}{ \texttt{LLaMA3.2-3B-Instruct}} & \multicolumn{3}{c}{ \texttt{LLaMA3.1-8B-Instruct}} \\
\cmidrule(lr){2-4} \cmidrule(lr){5-7}
\textbf{Method} & $\mathbf{m_{1}}$\textbf{(10)} & $\mathbf{m_{1}}$\textbf{(30)} & $\mathbf{m_{1}, m_{2}}$\textbf{(60)} & $\mathbf{m_{1}}$\textbf{(10)} & $\mathbf{m_{1}}$\textbf{(30)} & $\mathbf{m_{1}, m_{2}}$\textbf{(60)} \\
\midrule
Last K (hard init) & 1.36 & 1.53 & 1.62 & 1.38 & 1.56 & 1.67 \\
Sample & \multirow{2}{*}{1.39} & \multirow{2}{*}{1.57} & \multirow{2}{*}{1.65}
       & \multirow{2}{*}{1.41} & \multirow{2}{*}{1.60} & \multirow{2}{*}{1.69} \\
(embedding distribution) &  &  &  &  &  &  \\
Mean (soft init) & \textbf{1.41} & \textbf{1.59} & \textbf{1.67} & \textbf{1.42} & \textbf{1.62} & \textbf{1.71} \\
\bottomrule
\end{tabular}%
}
\label{tab:mask_design_combined}
\end{table}

% We also evaluate different mask token initialization strategies, as described in \cref{subsec:mask_token_design}. Among the variants tested, initializing the mask token using the \textbf{mean of the prompt embeddings} consistently yields the best performance across \texttt{LLaMA3} models, as shown in \cref{tab:mask_design_combined}. This initialization provides a strong contextual prior, allowing the model to better align its predictions with the prompt semantics. These results suggest that even simple embedding-based heuristics can significantly influence multi-token prediction quality in training-free settings. We additionally run experiments to stress test the robustness of the mask token embedding design in Appendix \cref{subsec:sample_init} and observed a minor performance drop when the mask token initialization is outside the embedding table distribution.

%%% UPDATED 
We compare three strategies for initializing the mask token embedding:
\begin{itemize}
    \item [a.] Last K (hard init) — uses the embeddings of the last $k$ prompt tokens, $m_i = \mathbf{e}_{t-k-i},\ \forall, i = 1, \dots, k$
    \item [b.] Sample — draws each mask token from a Gaussian fit to the full embedding table, $m_i \sim \mathcal{N}(\mu, \sigma^2 I)$, where $\mu$ and $\sigma$ are the mean and standard deviation across all $V$ vocabulary embeddings
    \item [c.] Mean (soft init) — the prompt-mean strategy used in our main method \cref{eq:mask_init_mean}
\end{itemize}
% (a) Last K (hard init) — uses the embeddings of the last $k$ prompt tokens, $m_i = \mathbf{e}_{t-k-i},\ \forall, i = 1, \dots, k$
% (b) Sample — draws each mask token from a Gaussian fit to the full embedding table, $m_i \sim \mathcal{N}(\mu, \sigma^2 I)$, where $\mu$ and $\sigma$ are the mean and standard deviation across all $V$ vocabulary embeddings
% (c) Mean (soft init) — the prompt-mean strategy used in our main method \cref{eq:mask_init_mean}

Among the variants tested, the mean of the prompt embeddings consistently yields the best performance across LLaMA3 models, as shown in \cref{tab:mask_design_combined}. This initialization provides a strong contextual prior, allowing the model to better align its predictions with the prompt semantics. These results suggest that even simple embedding-based heuristics can significantly influence multi-token prediction quality in training-free settings. We additionally run experiments to stress-test the robustness of the mask token embedding design in Appendix \cref{subsec:sample_init} and observed a minor performance drop when the mask token initialization is outside the embedding table distribution.

We additionally present ablations on the impact of increasing block complexity (BC = 30 to 120) in Appendix \cref{subsec:BC_ablation}, and a comparison of speedup ratio with training-based method in Appendix \cref{subsec:eagle3_comparison}.

% \vspace{-0.5cm}
\section{Related Work}
\vspace{-0.1cm}
\label{sec:related_work}
\textbf{Mechanistic Interpretability}
Future Lens \citep{pal2023future} demonstrates that future token information is linearly decodable from single hidden states and via single‑state interventions, providing a mechanistic rationale for our probing strategy. In parallel \citep{wu2024language}, formalize whether transformers plan ahead, offering pre‑caching versus breadcrumbs hypotheses consistent with our empirical finding that hidden states can be mined to propose several future tokens. 

\textbf{Multi-token prediction (MTP)} accelerates LLM inference by predicting multiple tokens in parallel \citep{gloeckle2024better, guo2025deepseek}. Recent methods include MEDUSA \citep{cai2024medusa}, which adds decoding heads and tree-based attention, and masked-input approaches with learnable sampler modules \citep{samragh2025your}. Other works train independent output heads atop a shared trunk \citep{gloeckle2024better, mehra2025multi} while finetuning the base model parameters. Unlike these, our method is training-free and operates on frozen LLMs using mask token probing and dynamic tree construction.

\textbf{Speculative Decoding (SD)} generates draft tokens for parallel verification \citep{leviathan2023fast, chen2023accelerating}. Extensions improve block efficiency via token trees \citep{miao2023specinfer, sun2023spectr}. Some reuse target model layers for drafting \citep{zhang2023draft, elhoushi2024layer}, while others modify attention for multi-token prediction \citep{bhendawade2024speculative, lin2024bita}. These approaches often require architectural changes or auxiliary models; ours does not.

\textbf{Training-free acceleration} includes Swift \citep{xia2024swift}, Jacobi decoding \citep{santilli2023accelerating}, LADE \citep{fu2024break}, and PLD/STAND \citep{saxena2023prompt, song2025accelerated}. These methods rely on caching or heuristic matching, whereas our approach avoids memory overhead and uses probing for dynamic tree construction.

Additional related work, using prompt/register tokens for MTP \citep{chen2024hardware, gerontopoulos2025multi}, are discussed in Appendix \cref{supp_sec:extended_related_works}.
% \vspace{-0.5cm}
\section{Conclusion}
% \vspace{-0.3cm}
We present a training-free framework for multi-token prediction via probing, leveraging mask tokens from the embedding space to guide parallel generation. Our method introduces dynamic token tree expansion and block complexity-aware decoding, enabling efficient speculative inference without requiring draft models, N-gram caches, or offline token trees. Our theoretical insight reasons why probing with mask tokens leads to correct token prediction. Through extensive experiments across diverse tasks and model scales, we demonstrate that even a single mask token can yield substantial gains in block efficiency leading to reduction in model forward calls—often outperforming existing baselines—while dynamic branching with multiple mask tokens further enhances performance in deeper tree configurations.
Our analysis reveals that open-ended tasks benefit from wider token trees, while closed-ended tasks prefer deeper, more focused trees. We also propose efficient implementations for attention masks and position IDs that significantly reduce runtime overhead. These findings collectively support the hypothesis that large language models, when probed appropriately, can confidently predict multiple future tokens without additional training or architectural changes.

\section*{Impact Statement}

This work introduces a training-free multi-token prediction framework that enables substantial inference-time speedups for large language models without requiring model retraining, auxiliary networks, or architectural modifications. By leveraging mask-token probing and dynamic token-tree expansion, our method delivers consistent gains in throughput and efficiency across model families while maintaining lossless generation. These properties make the approach particularly impactful for compute-constrained environments such as mobile devices, embedded systems, and edge deployments, where storage, memory, and power budgets limit the use of heavyweight speculative decoding frameworks. Beyond practical efficiency improvements, the method sheds light on latent multi-step predictive structure within decoder representations, offering insights for interpretability and future model design. We anticipate that this work will help broaden access to high-quality LLM inference by lowering deployment costs and enabling faster, more scalable generation in real-world applications.

% In the unusual situation where you want a paper to appear in the
% references without citing it in the main text, use \nocite
%\nocite{langley00}

\bibliography{iclr2026_conference}
\bibliographystyle{icml2026}

%%%%%%%%%%%%%%%%%%%%%%%%%%%%%%%%%%%%%%%%%%%%%%%%%%%%%%%%%%%%%%%%%%%%%%%%%%%%%%%
%%%%%%%%%%%%%%%%%%%%%%%%%%%%%%%%%%%%%%%%%%%%%%%%%%%%%%%%%%%%%%%%%%%%%%%%%%%%%%%
% APPENDIX
%%%%%%%%%%%%%%%%%%%%%%%%%%%%%%%%%%%%%%%%%%%%%%%%%%%%%%%%%%%%%%%%%%%%%%%%%%%%%%%
%%%%%%%%%%%%%%%%%%%%%%%%%%%%%%%%%%%%%%%%%%%%%%%%%%%%%%%%%%%%%%%%%%%%%%%%%%%%%%%
\newpage
\appendix
\onecolumn
% \clearpage
% \begin{center}
%     {\Large \textbf{Supplementary for ``Training-Free Multi-Token Prediction via Probing"}}\\[0.5em]
%     \rule{\textwidth}{0.4pt}
% \end{center}

\section{Intuition for Multi-Token Prediction Using Mask Tokens}
\label{subsec_appendix:why_online_ssd_works}

This section provides the formal proof for the claim stated in \cref{subsec:why_online_ssd_works} of the main paper, which involves comparing the hidden states of the next-true token and the forecast (mask) token when both attempt to predict the same position. As an example, for a prompt with $n$ tokens, we compare the hidden states of:
\[
x_{t+1} \text{ (next-true  token)}, \quad
m_{t+1} \text{ (mask token)}
\]
both trying to predict the $(t+2)^{\text{th}}$ token:
\[
x_{t+2}, \quad \hat{x}_{t+2,i} \text{ for } i \in  \operatorname{argtopK}_{i} \text{ logits}_{m_{t+1}}.
\]

This analysis is only possible in hindsight, as during inference we do not have access to next-true  token at the current generation step, and hope to generate next-next true using mask token.

We show theoretically that a higher cosine similarity between the hidden states of the mask token and the next-true token results in the next-next-true  token being included in the Top-$K$ prediction set of the mask token. Intuitively, we believe that later decoder layers in LLMs enrich the mask token representation so that it aligns with the valid token state, thereby improving the likelihood of predicting correct next-next token.

\begin{lemma}
Let $h_m, h_v \in \mathbb{R}^d$ be hidden states for the mask token and the next-true token after the last decoder layer and let $W \in \mathbb{R}^{d\times V}$ be the LM head with columns $w_r\in \mathbb{R}^{d}$. Assume $||h_m||_{2}, ||h_v||_{2} \leq c_h$ and $||w_r||_{2}\leq c_w , \forall \, r$. 
We define $i^* = \operatorname{argmax}_{r} w_r^Th_v$ as the next-next true token (under greedy sampling) and $S_m = \operatorname{argtopK}_{r} w_r^T h_m$ be the next top-K tokens under the mask token. Then,
\begin{align}
i^* \in S_m, \text{ if } cosine\textunderscore similarity(h_m, h_v) \geq \delta^* 
\text{ for some } \delta^* \in \mathbb{R}_{> 0} \nonumber
\end{align}
\end{lemma}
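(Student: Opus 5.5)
The approach exploits the fact that the Top-$K$ set $S_m$ is invariant under positive rescaling of $h_m$, so only the \emph{direction} of $h_m$ matters. That direction is controlled exactly by $\cos(h_m,h_v)$. I would assume $h_m,h_v\neq 0$, since otherwise the cosine similarity is undefined, and write $u_m = h_m/\|h_m\|_2$ and $u_v = h_v/\|h_v\|_2$. Then $\operatorname{argtopK}_r w_r^T h_m = \operatorname{argtopK}_r w_r^T u_m$ and $i^* = \operatorname{argmax}_r w_r^T u_v$. The key identity is $\|u_m - u_v\|_2^2 = 2 - 2\cos(h_m,h_v)$. So $\cos(h_m,h_v)\geq \delta$ gives $\|u_m-u_v\|_2 \leq \sqrt{2-2\delta}$. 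The bound $c_h$ will then play no essential role; $c_w$ is what converts closeness of directions into closeness of logits.

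The first step is a perturbation bound on logit differences. For every $r$, Cauchy--Schwarz gives
\[
(w_r - w_{i^*})^T u_m \leq (w_r - w_{i^*})^T u_v + \|w_r - w_{i^*}\|_2\,\|u_m - u_v\|_2 \leq -g_r + 2c_w\sqrt{2-2\delta},
\]
where $g_r = (w_{i^*} - w_r)^T u_v \geq 0$ is the normalized margin of $i^*$ over $r$ under the true-token state. Token $r$ can outrank $i^*$ under the mask state only if $g_r < 2c_w\sqrt{2-2\delta}$.

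The second step turns this into a Top-$K$ statement. Order the margins $0 < g_{(1)} \leq g_{(2)} \leq \dots$ over $r\neq i^*$, and let $\gamma_K = g_{(K)}$ be the $K$-th smallest margin, i.e.\ the gap between $i^*$ and the $(K{+}1)$-th ranked token of $u_v$. If $2c_w\sqrt{2-2\delta} \leq \gamma_K$, then at most $K-1$ tokens can beat $i^*$ under $u_m$. Hence $i^*$ lies among the top $K$, i.e.\ $i^*\in S_m$. Solving for $\delta$ gives the threshold
\[
\delta^* = \max\Bigl\{1 - \frac{\gamma_K^2}{8c_w^2},\ \epsilon\Bigr\}
\]
for any small $\epsilon\in(0,1)$. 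Raising the threshold only strengthens the hypothesis, so taking the maximum keeps $\delta^*\in\mathbb{R}_{>0}$ as the statement requires. It is also worth remarking that $\delta^*$ decreases as $K$ grows, which matches the intuition in \cref{subsec:why_online_ssd_works}.

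The main obstacle is degeneracy rather than the calculation. The margin $\gamma_K$ must be strictly positive, which requires that $i^*$ be separated from the $(K{+}1)$-th best token under $h_v$. With exact ties, argmax and argtopK are ambiguous, and no cosine threshold below $1$ suffices. I would handle this by either assuming the greedy argmax is unique (enough for all $K\geq1$ using $g_{(1)}$) or adopting a tie-breaking convention. I would also state explicitly that $\delta^*$ depends on $W$ and on $h_v$ through $\gamma_K$, so the lemma is an existence statement rather than a uniform threshold. If a uniform bound in terms of $c_h$ is desired, one could instead work with unnormalized margins $\|h_v\|_2\gamma_K$, but the normalized version above is cleaner.
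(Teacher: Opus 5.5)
Your argument is correct, and it takes a different route from the paper: you normalize first. The paper works with the raw states. It bounds $|w_r^\top(h_m-h_v)|\le c_w\|h_m-h_v\|_2$ and then tries to control $\|h_m-h_v\|_2$ through $c_h$ via $h_m^\top h_v\ge \|h_m\|_2\|h_v\|_2\,\delta\ge c_h^2\delta$.

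That second inequality runs the wrong way, because $c_h$ only upper-bounds the norms. The resulting bound $\|h_m-h_v\|_2\le\sqrt{2}\,c_h\sqrt{1-\delta}$ is false in general: take $h_m=c_h u$ and $h_v=\eta u$ with $\eta$ small, so the cosine is $1$ but $\|h_m-h_v\|_2\approx c_h$. Your observation that $S_m$ and $i^*$ are invariant under positive rescaling is exactly what removes this problem, since cosine similarity controls $\|u_m-u_v\|_2$ exactly.

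After that, both proofs make the same pairwise-margin comparison. Your $\|w_r-w_{i^*}\|_2\le 2c_w$ plays the role of the paper's two summed inequalities and gives the same factor $2$. Your threshold is sharper in two respects:
\begin{itemize}
\item the margin is normalized by $\|h_v\|_2\le c_h$ rather than by $c_h$;
\item you use the gap to the $(K{+}1)$-th ranked token rather than the paper's gap to the $K$-th.
\end{itemize}
As a result, you already get a nontrivial $\delta^*<1$ for $K=1$ when the argmax is unique, whereas the paper's threshold degenerates to $1$ there.

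The paper's $\delta^*=1-\bigl(\Delta^v_{K^*}/(2\sqrt{2}\,c_wc_h)\bigr)^2$ is never smaller than your $1-\gamma_K^2/(8c_w^2)$. So the paper's stated conclusion does hold, and it follows from your argument rather than from its own derivation.

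One small simplification: the $\max\{\cdot,\epsilon\}$ is unnecessary. Since $\gamma_K\le\|w_{i^*}-w_r\|_2\le 2c_w$, you automatically have $1-\gamma_K^2/(8c_w^2)\ge 1/2>0$.
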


\begin{proof}
For each $r$ we have:
\begin{equation}
| w_r^\top (h_m - h_v) | \le \| w_r \|_2 \| h_m - h_v \|_2 \le c_w \| h_m - h_v \|_2 \label{eq1}
\end{equation}

Now compute:
\[
\| h_m - h_v \|_2^2 = \| h_m \|^2 + \| h_v \|^2 - 2 h_m^\top h_v.
\]
Using $cosine\textunderscore similarity(h_m, h_v) \ge \delta$:
\[
h_m^\top h_v \ge \| h_m \| \| h_v \| \delta \ge c_h^2 \delta.
\]
So:
\[
\| h_m - h_v \|_2^2 \le 2 c_h^2 - 2 c_h^2 \delta = 2 c_h^2 (1 - \delta).
\]
Therefore:
\[
\| h_m - h_v \|_2 \le \sqrt{2} c_h \sqrt{1 - \delta}
\]

Combining with \eqref{eq1} we get 
\begin{equation}
    | w_r^\top (h_m - h_v) | \leq c\sqrt{1-\delta}, \label{eq2}
\end{equation}

where $c = \sqrt2 c_wc_h$. Now, define $\Delta_j^m = w^T_{i^*}h_m - w^T_j h_m $ and $\Delta_j^v = w^T_{i^*}h_v - w^T_j h_v $ be the difference in the logits between the token $i^*$ and $j$ under the mask token and next-true token respectively. We know that $\Delta_j^v \geq 0, \forall j$ since $i^* = \argmax_{r} w_r^Th_v$. Also, using \eqref{eq2} we have:

\begin{align*}
    w^T_{i^*}h_m - w^T_{i^*}h_v &\geq -c\sqrt{1-\delta} \\
    w^T_{j}h_v - w^T_{j}h_m &\geq -c\sqrt{1-\delta}
\end{align*}

Adding the above two inequalities we get:
\begin{align*}
    \Delta_j^m -\Delta_j^v &\geq -2c\sqrt{1-\delta} \\
    \implies \Delta_j^m &\geq \Delta_j^v -2c\sqrt{1-\delta}
\end{align*}

Hence, $\Delta_j^m>0$ if $\Delta_j^v -2c\sqrt{1-\delta} > 0$ which holds true when 
\begin{equation}
    \delta > 1 - \left(\frac{\Delta_j^v}{2c}\right)^2 \label{eq3}
\end{equation}
i.e., token $j$ has smaller logit value than token $i^{*}$ even under mask token output logits if \cref{eq3} holds. 

Let $S_v \triangleq \operatorname{argtopK}_{r} w_r^T h_v$ be the top-K tokens under the valid token $v$, and  $K^*$ be the top-$K^{th}$ index for $w_r^Th_v$ under next-true token $v$ as well. 
Suppose $\delta >\delta^*$ where $\delta^*= 1 - \left(\frac{\Delta_{K^*}^{v}}{2c}\right)^2$, then we have the following:

\begin{align}
    \delta &> 1 - \left(\frac{\Delta_{K^*}^{v}}{2c}\right)^2 \geq 1 - \left(\frac{\Delta_{j}^{v}}{2c}\right)^2, \forall j \notin S_v \label{eq4}
\end{align}

This is because $\Delta_{K^*}^v \leq \Delta_j^{v}, \forall j \notin S_v$ due to the fact that the $K^*$ is in the top-K token set $S_v$.
 
Therefore using \cref{eq3}, \ref{eq4} we get,
\begin{align*}
     \Delta_j^m &= w^T_{i^*}h_m - w^T_j h_m \geq 0, \forall j \notin S_v \\
     \implies w^T_{i^*}h_m &\geq  w^T_j h_m , \forall j \notin S_v
\end{align*}

Hence, there are at least $V-K$ tokens for which $\Delta_j^m > 0$. Therefore, $i^*$ is in the top-K set $S_m$ when $\delta > \delta^*= 1 - \left(\frac{\Delta_{K^*}^{v}}{2c}\right)^2$

\end{proof}

% The \textbf{upper bound decreases as cosine similarity $\delta$ increases}. As $\delta \to 1$, the logit difference approaches zero.
% That is, the next valid token belongs to the set of top-$K$ draft tokens generated from mask token.

Note that to match the top-$1$ next-next true token with mask token outputs, $\Delta_{i^{*}}^{v} = 0 \implies \delta^{*}=1$, i.e., $\cos(h_{m}, h_{v})=1$. Thus, as the size of top-K set increases a smaller value of $\delta^{*}$ can still result in valid matches. To the best of our knowledge, this is the first proof to show connections between cosine similarity and acceptance in multi-token setting.

Finally, understanding why mask tokens achieve higher cosine similarity—specifically, how relevant information is injected into mask token states in the first place—remains an interesting question, which we leave for future work.

\section{Extended Related Works}
\label{supp_sec:extended_related_works}
\textbf{Multi-token prediction history} -- The idea of predicting multiple tokens has a long history: ProphetNet \citep{qi2020prophetnet} pre‑trains with \textbf{future n‑gram} prediction, while debates around NTP’s limitations have advocated teacherless multi‑token objectives; and PaSS \citep{monea2023pass} proposes \textbf{parallel speculative sampling} with a single model via a marker token. Our work is \textbf{training‑free}, \textbf{head‑free}, and focuses on acceptance‑aware parallel proposals via \textbf{embedding mask probing}.

\textbf{Multi-token prediction} -- 
Multi-token prediction (MTP) has emerged as a promising direction for accelerating LLM inference \citep{gloeckle2024better} by leveraging parallelism and improving sample efficiency \citep{guo2025deepseek}. MEDUSA \citep{cai2024medusa} augments LLMs with multiple decoding heads and a tree-based attention mechanism to predict future tokens in parallel, while fine-tuning entire model. \citet{samragh2025your} introduce a masked-input formulation with gated LoRA adaptation and a learnable sampler module, enabling multi-token generation. \cite{gloeckle2024better} trains a model to predict multiple future tokens using independent output heads atop a shared trunk. Couple more methods \citep{chen2024hardware}, \citep{gerontopoulos2025multi} add learnable prompt/register tokens to base model to enable multi-token predictions, where similar idea has been proposed apriori in \citet{lin2024bita}. \citet{chen2024hardware} involves training prompt tokens for each different model, and uses a static tree, whereas \citep{gerontopoulos2025multi} involves updating weights of the base model. Unlike these methods, our approach is entirely training-free and operates on any frozen LLMs, using mask token probing and dynamic tree construction to achieve efficient multi-token prediction without architectural changes or extra parameter overheads.

\textbf{Speculative Decoding} (SD) --
SD accelerates LLM inference by generating draft tokens using a smaller or faster model, which are then verified by the target model in parallel \citep{leviathan2023fast, chen2023accelerating}. Several extensions improve block efficiency by generating token trees  \cite{miao2023specinfer, sun2023spectr, jeon2024recursive, yang2024multi, li2024eagle}. More recently, methods eliminate the need for a separate draft model by reusing target model layers. For example, \cite{zhang2023draft} skips layers to derive a draft model with adaptive exit, while \cite{elhoushi2024layer} uses early-exit verification trained with layer dropout. \cite{bhendawade2024speculative} replaces multi-head attention in the final layers with multi-stream attention to predict multiple tokens concurrently, requiring end-to-end training. \cite{lin2024bita} introduces learnable mask and prompt tokens, enabling tree-based decoding and verification in a single forward pass. While effective, these methods often require architectural changes or auxiliary drafter models, unlike our approach, which is entirely training-free and drafter-free.

\textbf{Training-free Acceleration} --
A growing body of work explores training-free approaches to accelerate autoregressive decoding. Swift \citep{xia2024swift} proposes a training-free variant of \cite{zhang2023draft} by selecting draft layers online via context-aware Bayesian optimization. Jacobi decoding \citep{santilli2023accelerating} formulates parallel decoding as a system of non-linear equations solved through fixed-point iteration, but consistently performs worst to LADE. LADE \citep{fu2024break} tracks token trajectories using a fixed-size 2D window and generates n-grams in parallel, which are later verified by the target model. Prompt Lookup Decoding (PLD) \citep{saxena2023prompt} matches the last generated n-gram to past context and, if matched, uses the subsequent token trajectory for verification. STAND \citep{song2025accelerated} extends PLD by storing logits of the entire past context and sampling a token tree from matched n-gram strings. Compared to LADE and STAND, our method avoids memory overhead from caching logits or token trajectories. 

\section{Future Directions}
\textbf{Mechanistic Interpretability}:
Recent work in mechanistic interpretability \citep{conmy2023towards} has focused on probing internal components of language models, such as linear layers and activation patching. While our probing-based multi-token prediction (MTP) method is not directly aligned with these approaches, we believe it offers a complementary perspective. Specifically, our use of mask tokens from the embedding space opens up new avenues for understanding LLM behavior. Future work could explore how different types of mask tokens interact with model internals, potentially revealing interpretable structures or activation patterns that guide multi-token generation.

\textbf{Combining Baseline Decoding Methods with MTP}:
Our method can be synergistically combined with existing decoding strategies such as Lookahead Decoding (LADE) and Prompt-Lookup Decoding to further improve block efficiency and reduce the number of forward passes. One promising direction is to fuse token trajectories from both the N-gram cache in LADE and our probing-based method, then select top candidates from the combined pool. This hybrid approach could yield more robust and efficient decoding, especially in long-context or high-throughput settings.

\textbf{Efficient Dynamic Tree Attention Implementation}:
In our current implementation, using multiple mask tokens with dynamic token trees results in varying attention masks across model passes. A naive solution is to precompute and store attention masks for all possible branch configurations, but this quickly becomes memory-intensive as block complexity (BC) increases. Future work could explore more efficient attention computation strategies, such as dynamic masking via sparse attention kernels or runtime mask synthesis, to reduce memory overhead while preserving flexibility. Additionally, integration with efficient inference frameworks such vLLM will also be explored, where tree-attention based speculative decoding method such as EAGLE-2 \cite{li2024eagle} are already integrated.  

\textbf{Principled Design of Mask Tokens}:
Our current work explores simple instantiations of mask tokens, but a more principled approach to mask token design could yield significant gains in block efficiency. For instance, control-theoretic or optimization-based methods to select mask configurations, may lead to more effective probing. Investigating the geometry and distribution of mask tokens in embedding space could also provide insights into their predictive power and generalization behavior.

\textbf{Combining with Diffusion LLMs (dLLMs)}: 
Recently, diffusion-based language models (dLLMs) have gained popularity for their competitive performance compared to autoregressive LLMs (AR-LLMs). These models are inherently trained using mask tokens with a bidirectional attention mask, a fundamental difference from the causal attention mask used in AR-LLMs. Several efficient designs have been proposed \cite{sahoo2024simple,wu2025fastdllm,israel2025apd}. In future, it would be worth exploring designs that combine our method with bidirectional masked models capable of predicting multiple tokens by default.
\color{black}

\section{Token Tree Construct Details}
\label{sec:supp_dynamic_tree}

We perform \textbf{Top-$1$ tree expansion}, meaning that at each depth of the token tree, only the token with the highest probability is expanded to generate child nodes, following the approach in \citep{lin2024bita}. This strategy becomes relevant when using two or more mask tokens. 
% \section{Token Tree Expansion}
\begin{figure*}[htb] % The asterisk makes it span two columns
    \centering
    \includegraphics[width=0.9\textwidth]{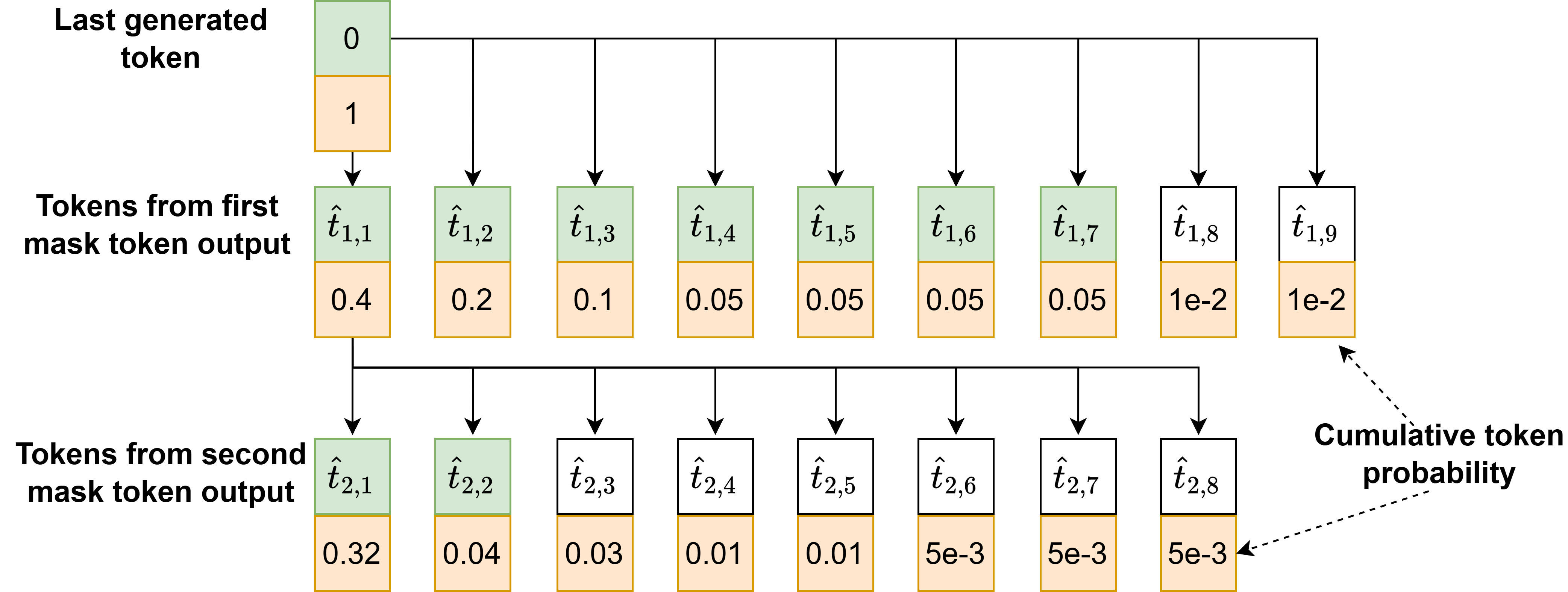} % Adjust width as needed
    \caption{Example of dynamic token tree expansion for block complexity (BC) = 30 using two mask tokens. As the tree expands, each child node inherits the probability of its parent, resulting in a multiplicative score. We denote the last accepted token (root) as $0$, the Top-$9$ tokens from the output logits of the first mask token as $\hat{t}_{1,1:9}$, and the Top-$8$ tokens from the second mask token as $\hat{t}_{2,1:8}$.}
    \label{fig:dynamic_tree_expansion}
\end{figure*}

Our motivation for this choice stems from the observation that generations from mask tokens ($m_2, \dots, m_k$) exhibit \textbf{weak conditional dependence}. Let the output probability distribution of the LLM for input $x_{t-1}$ be defined as:

\begin{equation}
    p_{\theta}(x_{t-1} \mid x_{<t-1})
\end{equation}
The output distributions from mask tokens are then:
\begin{align}
    & p_{\theta}(m_1 \mid x_{t-1}, x_{<t-1}) \rightarrow 1^{\text{st}}\text{ mask token output} \\
    & p_{\theta}(m_2 \mid m_1, x_{t-1}, x_{<t-1}) \rightarrow 2^{\text{nd}} \text{ mask token output} \\
    & p_{\theta}(m_k \mid m_{k-1}, \dots, m_1, x_{t-1}, x_{<t-1}) \rightarrow  \text{$k^{\text{th}}$ mask token output} \nonumber
\end{align}
From the second mask token onward, the conditional generation includes previous mask tokens—serving as an approximation of actual token embeddings. As these mask tokens propagate through the model, their hidden states are refined, eventually predicting the correct token.

Expanding only the Top-$1$ token at each depth effectively \textbf{maximizes the approximate joint likelihood} of generation from multiple mask tokens. 
% Due to weak conditioning, this joint likelihood can be approximated as:
% \begin{align}
    % p(\hat{x}_{t+1} \mid x_{<t}) \cdot p(\hat{x}_{t+2} \mid x_{<t}, m_1)
% \end{align}
This expansion also helps maintain the token tree within a manageable block complexity (BC). Dynamic tree expansion is used to prioritize high-likelihood trajectories while avoiding exponential growth in tree size.

An example is shown in \cref{fig:dynamic_tree_expansion} for two mask tokens ($m_1$, $m_2$) with BC = 30. In this case, we retain the Top-7 tokens from the output logits of $m_1$ and the Top-2 tokens from $m_2$, resulting in a compact yet expressive tree structure.

% As shown above, from second mask token onward the conditional generation includes previous mask tokens--an approximation of actual token embedding. As the mask tokens propagate through the model, their hidden-state is refined further, finally predicting the correct token. Expanding only for Top-$1$ is maximizing the approximate joint likelihood of generation from multiple mask tokens, due weak conditioning, this will be product of $p(\hat{x}_{t+1}|x_{<t})p(\hat{x}_{t+2}|x_{<t},m_{1})$. 
% Computing the branches using dynamic tree expansion is again to keep high likelihood trajectories. 
% A tree is shown in \cref{fig:dynamic_tree_expansion} for two mask tokens ($m_{1}, m_{2}$) with BC=30, in this case we keep Top-$7$ tokens from output logits of $m_{1}$ and the Top-$2$ tokens from output logits of $m_{2}$.

\section{Efficient Tree Attention Mask and Position ID Construction}
\label{sec:efficient_mask}
\begin{figure*}[htbp] % The asterisk makes it span two columns
    \centering
    \includegraphics[width=0.9\textwidth]{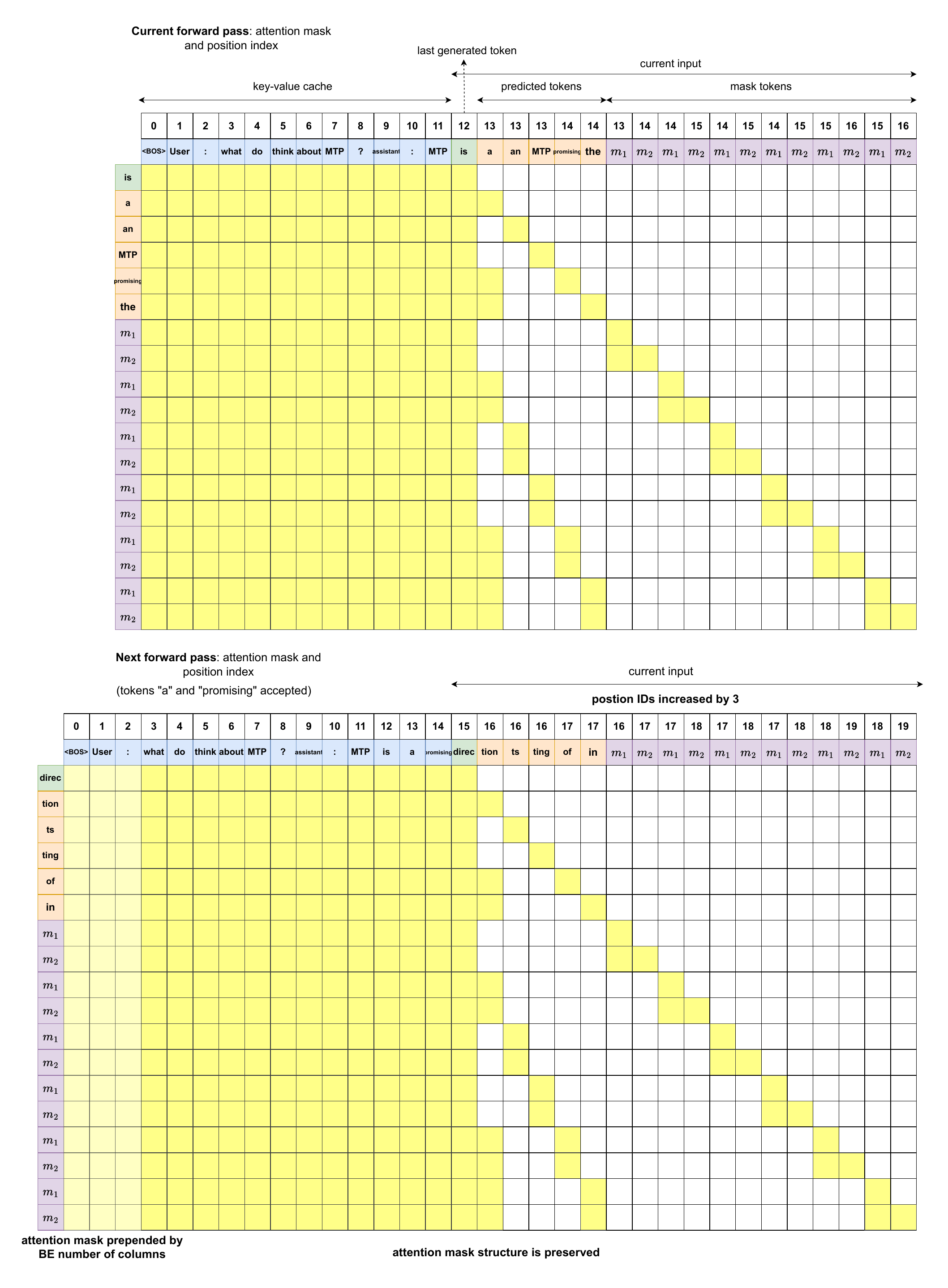} % Adjust width as needed
    \caption{Attention mask and position index for the current (top) and next (bottom) LLM forward pass in our method with static branch configuration \texttt{[3,2]}. The tokens in blue are part of key-value cache, green token is the last generated token which will be inputted to the model to generate next token, orange tokens are future tokens, and purple tokens are mask tokens. Position IDs of each token is mentioned above it. The structure of the attention mask remains unchanged except for the addition of \texttt{BE} columns filled with zeros (assuming non-attended positions are represented by $-\infty$). Similarly, the position indices are preserved in order but uniformly incremented by \texttt{BE}.}
    \label{fig:efficient_attn_pid}
\end{figure*}
Using a token tree structure requires respecting its branching hierarchy during input construction, which introduces a sequential loop over tree nodes. This affects how position IDs and attention masks are computed—each must reflect the structure of the tree and the order of tokens in the input. In our proposed method, we found that custom tree attention mask generation is a sequential process that iterates over all branches of the token tree, resulting in high latency.

To address this, we exploit the static structure of the token tree—specifically when using a single mask token or two mask tokens with fixed branching—during the simultaneous generation and verification phase. As described in \cref{subsec:block_complexity}, for a given block complexity (BC), the input token order is fixed: last generated token, future predicted tokens, and mask tokens. Each of these components requires custom position indices (PID) and tree attention masks, as illustrated in \cref{fig:prelim_diagram}.

After the prefill phase, as generation progresses, both the PID tensor and attention mask evolve based on the number of tokens generated (i.e., block efficiency, BE) as shown in \cref{fig:efficient_attn_pid}. We observe that:

\textbf{Position IDs (PID)}: The only change across generation steps is a uniform shift in indices. The PID tensor from the previous step can be reused by simply adding the number of generated tokens (BE) to each index.

\textbf{Attention Mask}: The shape of the attention mask changes as the key-value cache grows. Specifically, the number of columns increases by BE, and these new columns are filled with zeros (assuming non-attended positions are represented by $-\infty$). This effectively prepends BE columns of zeros to the previous step's mask, yielding the correct attention mask for the current pass.

% These optimizations are illustrated in \cref{fig:efficient_attn_pid}.

In \cref{tab:efficient_attn_combined} in main manuscript, we report token-rate improvements for token trees with varying number of mask tokens. For example, using a single mask token (with tree node counts of 15 and 30 for BC = 30 and 60, respectively), we observe: for \texttt{LLaMA3.2-3B-Instruct}, token-rate increases by $4\%$ for BC = 30 and $19.6\%$ for BC = 60. For \texttt{LLaMA3.1-8B-Instruct}, token-rate increases by $28\%$ for BC = 30 and $22.4\%$ for BC = 60.
These results demonstrate that efficient reuse of attention masks and position IDs can significantly accelerate decoding, especially for larger token trees and larger models.

Note that, token-rate is influenced by both block efficiency (BE) and future token tree size, which depend on block complexity and the number of mask tokens. Our method scales well with tree depth and enables efficient decoding for high-capacity models. Future work will explore GPU-optimized dynamic tree expansion and integration with serving frameworks like vLLM \citep{kwon2023efficient} and SGLang \citep{zheng2024sglang}.

\section{Baseline Configurations}
\label{sec:baseline_configs}
\textbf{PLD}: 
Uses a token tree of depth 10, with one token per depth. The future token trajectory is selected by matching the last generated n-gram to past context and verifying the subsequent tokens.

\textbf{STAND}:
Constructs a token tree of depth 10, with additional branches at each depth based on block complexity. The number of tokens per depth is determined by selecting candidates with the highest cumulative probability from stored logits of matched n-gram strings. STAND is equivalent to PLD for smaller BC=10.

\textbf{LADE}: Defines block complexity as $(L-1)(W+G)$
\begin{itemize}
    \item $L$: N-gram size
    \item $W$: Window size
    \item $G$: Guess set size
\end{itemize}
Configurations used:
\begin{itemize}
    \item BC=10: $L$=3, $W$=4, $G$=1 (minimum window size required is 3)
    \item BC=30: $L$=4, $W$=5, $G$=5
    \item BC=60: $L$=5, $W$=8, $G$=7
\end{itemize}

\section{Additional Results}
\label{sec:additional_results}
We provide additional comprehensive results to offer deeper insights into the behavior and performance of the proposed method.

\subsection{Reduction in model forward calls}
\label{subsec:reduction_model_calls}
Our method achieves the highest reduction in model forward passes, as shown in \cref{tab:bc_summary_combined_reduction_model_call}. 
Reducing the number of model calls helps save compute and energy, which is especially beneficial for edge or portable devices.
% \begin{table*}[htb]
% \centering
% \caption{Comparison of multi-token prediction \textbf{reduction in model forward calls} performance across models and methods averaged on Spec-bench tasks for block complexity $BC=30$ and $BC=60$}
% \resizebox{\textwidth}{!}{%
% \begin{tabular}{lcccccccc}
% \toprule
% \textbf{Model} & \multicolumn{4}{c}{\textbf{BC=30}} & \multicolumn{4}{c}{\textbf{BC=60}} \\
% \cmidrule(lr){2-5} \cmidrule(lr){6-9}
%  & \textbf{PLD} & \textbf{STAND} & \textbf{LADE} & \textbf{OURS} & \textbf{PLD} & \textbf{STAND} & \textbf{LADE} & \textbf{OURS} \\
% \midrule
% \texttt{LLaMA3.2-3B-Instruct} & 27.54 & 29.58 & 29.08 & \textbf{37.11} & 27.54 & 31.97 & 32.89 & \textbf{40.12} \\
% \texttt{LLaMA3.1-8B-Instruct} & 23.08 & 27.54 & 27.54 & \textbf{38.27} & 23.08 & 30.07 & 33.77 & \textbf{41.52} \\
% \texttt{Qwen3-8B}             & 19.35 & 24.24 & 25.37 & \textbf{35.90} & 19.35 & 25.93 & 31.51 & \textbf{39.76} \\
% \texttt{Qwen3-32B}            & 21.26 & 24.24 & 24.81 & \textbf{35.06} & 21.26 & 27.01 & 31.03 & \textbf{39.39} \\
% \bottomrule
% \end{tabular}%
% }
% \label{tab:bc_summary_combined_reduction_model_call}
% \end{table*}
%% UPDATED TABLE
\begin{table*}[htb]
\centering
\caption{Comparison of multi-token prediction \textbf{reduction in model forward calls} performance across models and methods averaged on Spec-bench tasks for block complexity $BC=30$ and $BC=60$}
\resizebox{\textwidth}{!}{%
\begin{tabular}{lcccccccc}
\toprule
\textbf{Model} & \multicolumn{4}{c}{\textbf{BC=30}} & \multicolumn{4}{c}{\textbf{BC=60}} \\
\cmidrule(lr){2-5} \cmidrule(lr){6-9}
 & \textbf{PLD} & \textbf{STAND} & \textbf{LADE} & \textbf{ESP} & \textbf{PLD} & \textbf{STAND} & \textbf{LADE} & \textbf{ESP} \\
\midrule
\texttt{LLaMA3.2-3B-Instruct} & 30.07 & 35.48 & 30.07 & \textbf{35.90} & 30.07 & 38.27 & 36.31 & \textbf{38.65} \\
\texttt{LLaMA3.1-8B-Instruct} & 30.56 & 36.71 & 31.03 & \textbf{38.65} & 30.56 & 39.02 & 37.50 & \textbf{41.52} \\
\texttt{Qwen3-8B}             & 23.66 & 29.58 & 35.48 & \textbf{38.65} & 23.66 & 32.43 & 42.20 & \textbf{42.53} \\
\texttt{Qwen3-32B}            & 22.48 & 29.58 & 34.21 & \textbf{37.50} & 22.48 & 32.43 & 40.83 & \textbf{41.18} \\
\bottomrule
\end{tabular}%
}
\label{tab:bc_summary_combined_reduction_model_call}
\end{table*}

\subsection{Block Efficiency on additional models}
\label{subsec:BE_on_more_models}
We also compare BE across downstream tasks for  \texttt{LLaMA3.2-3B-Instruct} and \texttt{Qwen3-8B} at BC = 60, as shown in \cref{fig:llama3.2-3b-best} and \cref{fig:qwen3-8b-best}. Similar to results in \cref{fig:llama3.1-8B-comprehensive}, \ref{fig:qwen3-32b-comprehensive}, our method outperforms other baselines across most tasks, with the exception of `retrieval' on \texttt{LLaMA3.2-3B-Instruct} and `summarization' on \texttt{QWen3-8B}, where it performs second best. For \texttt{Qwen3} models, single mask token is used for BC=60 while for \texttt{LLaMA3} model two mask tokens are used. 

% \begin{figure*}[ht]
%     \centering
%     \begin{subfigure}[t]{0.48\textwidth}
%         \centering
%         \includegraphics[width=\textwidth]{images/llama3.2-3b-instruct_max_complexity_acceptance_comparison.pdf}
%         \caption{\texttt{LLaMA3.2-3B-Instruct}}
%         \label{fig:llama3.2-3b-best}
%     \end{subfigure}
%     \hfill
%     \begin{subfigure}[t]{0.48\textwidth}
%         \centering
%         \includegraphics[width=\textwidth]{images/qwen3-8b_max_complexity_acceptance_comparison.pdf}
%         \caption{\texttt{Qwen3-8B}}
%         \label{fig:qwen3-8b-best}
%     \end{subfigure}

%     \caption{ Block efficiency across SpecBench tasks for \texttt{LLaMA3.2-3B-Instruct} and \texttt{QWen3-8B} at block complexity (BC) = 60. Our method consistently outperforms baselines across most tasks, demonstrating strong performance in both open-ended and closed-ended settings.}
%     \label{fig:acceptance-comparison-llama3b-qwen8b}
%     \vspace{-0.5cm}
% \end{figure*}

\begin{figure*}[ht]
    \centering
    \begin{subfigure}[t]{0.48\textwidth}
        \centering
        \includegraphics[width=\textwidth]{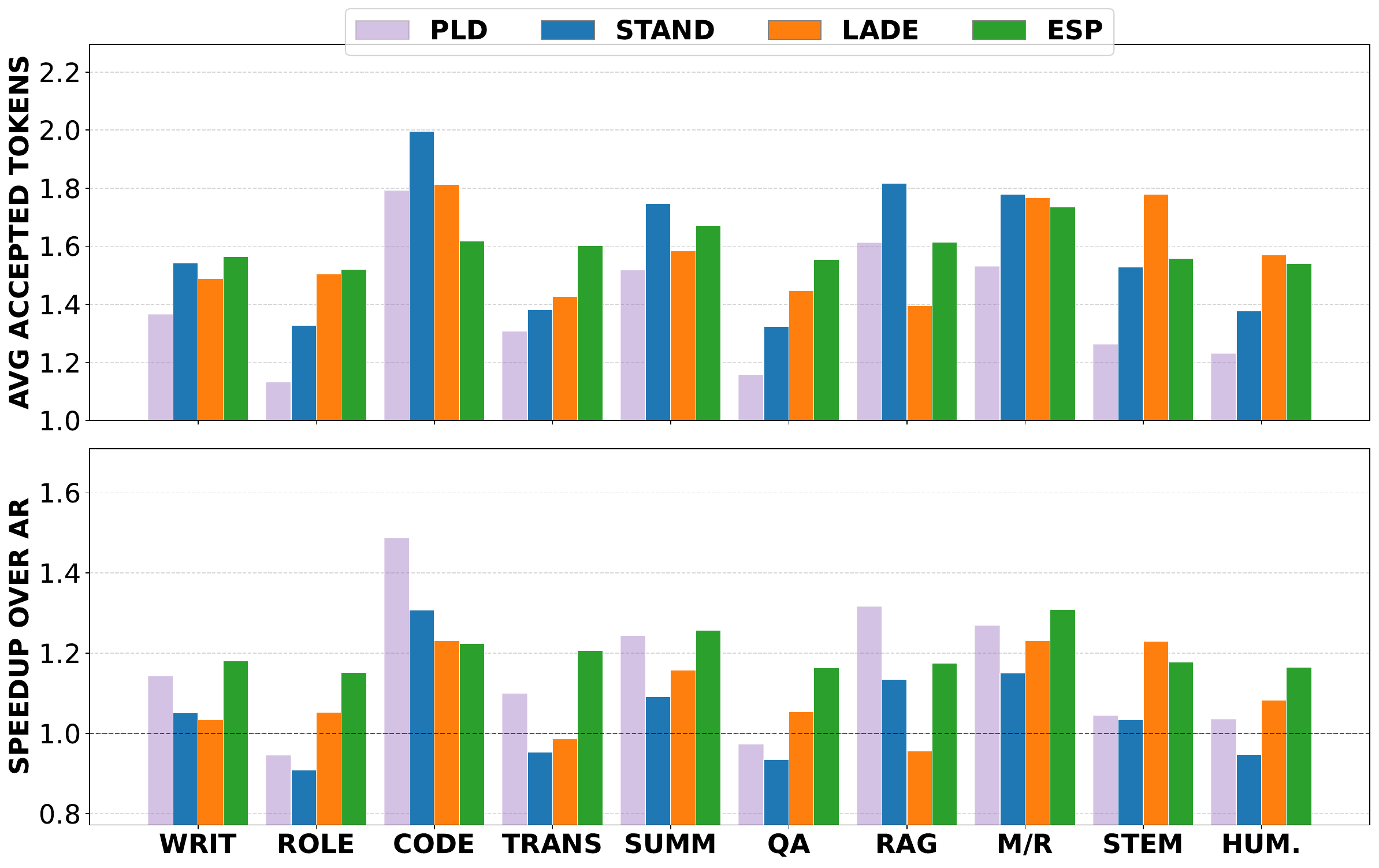}
        \caption{\texttt{LLaMA3.2-3B-Instruct}}
        \label{fig:llama3.2-3b-best}
    \end{subfigure}
    \hfill
    \begin{subfigure}[t]{0.48\textwidth}
        \centering
        \includegraphics[width=\textwidth]{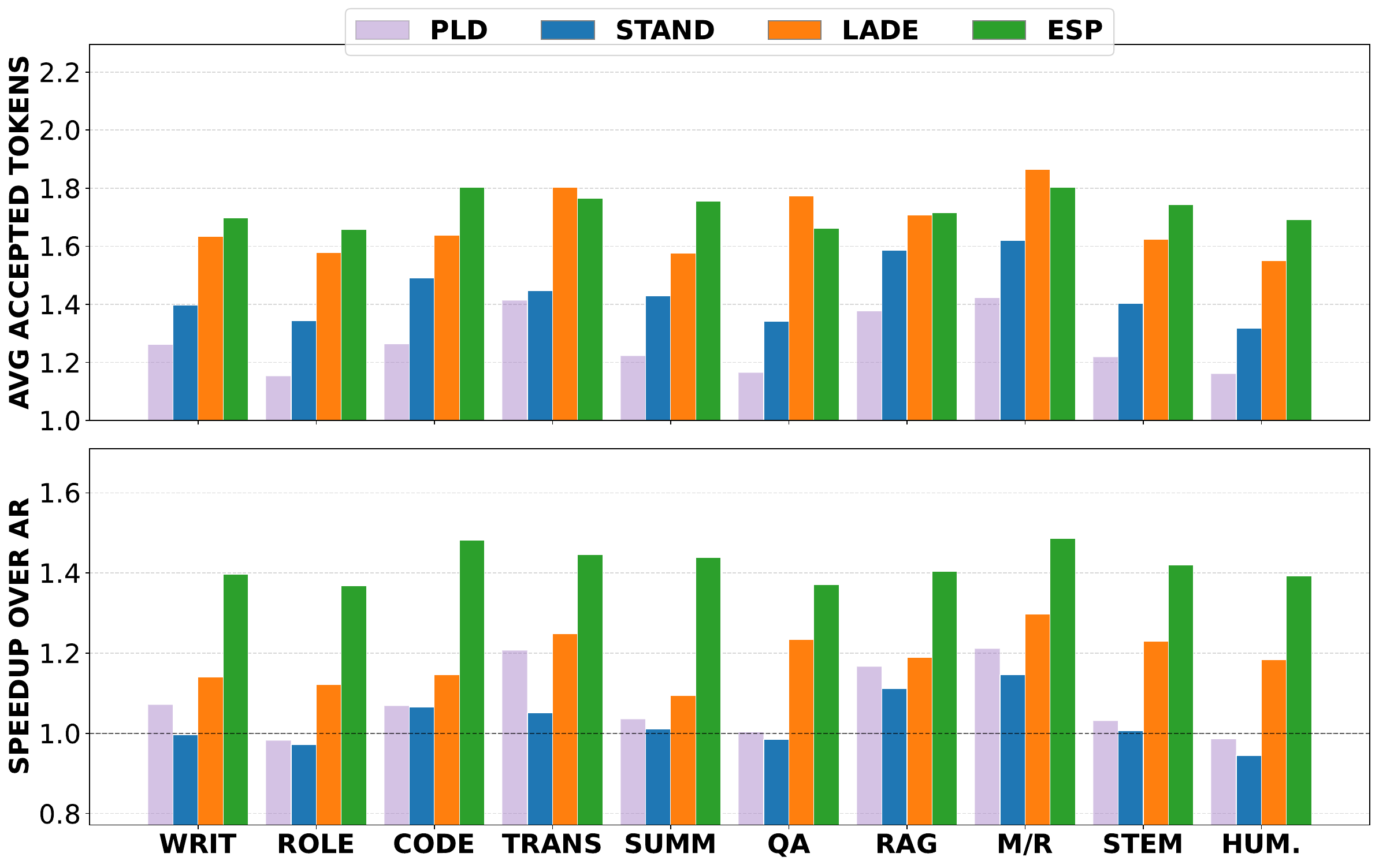}
        \caption{\texttt{Qwen3-8B}}
        \label{fig:qwen3-8b-best}
    \end{subfigure}

    \caption{ Block efficiency across SpecBench tasks for \texttt{LLaMA3.2-3B-Instruct} and \texttt{QWen3-8B} at block complexity (BC) = 60. Our method consistently outperforms baselines across most tasks, demonstrating strong performance in both open-ended and closed-ended settings.}
    \label{fig:acceptance-comparison-llama3b-qwen8b}
    \vspace{-0.5cm}
\end{figure*}

\subsection{Impact of Sampling}
\label{subsec:BE_with_TM_sampling}
We run all methods with base model in sampling mode with \textbf{temperature=}$\mathbf{1.0}$ for BC=$30, 60$ with output length = 100, and provide the block efficiency in for \texttt{Llama3.2-3B-Instruct} and \texttt{Llama3.1-8B-Instruct} in \cref{tab:llama3.2-3b-sample-block_efficiency}, \ref{tab:llama3.1-8b-sample-block_efficiency} respectively. Our method outperforms other methods in terms of maximum number of mean accepted tokens on average across different SpecBench tasks showing the efficacy of our method for different sampling strategies.

\begin{table}[h]
\centering
\caption{Block-efficiency with \textbf{temperature=$1.0$} for \texttt{Llama3.2-3B-Instruct} across SpecBench tasks for BC = 30 and BC = 60. Task acronyms: WRIT (writing), ROLE (roleplay), CODE (coding), TRANS (translation), SUMM (summarization), QA (question answering), RAG (retrieval-augmented generation), M/R (math/reasoning).}
\resizebox{0.75\textwidth}{!}{%
\begin{tabular}{llccccccccc}
\toprule
\textbf{Method} & \textbf{BC} & \textbf{WRIT} & \textbf{ROLE} & \textbf{CODE} & \textbf{TRANS} & \textbf{SUMM} & \textbf{QA} & \textbf{RAG} & \textbf{M/R} & \textbf{AVG} \\
\midrule
PLD & 30 & 1.53 & 1.10 & 1.24 & 1.22 & 1.62 & 1.15 & 1.77 & 1.46 & 1.39 \\
STAND & 30 & 1.31 & 1.17 & 1.31 & 1.20 & 1.55 & 1.18 & 1.53 & 1.43 & 1.33 
\\
LADE & 30 & 1.37 & 1.27 & 1.39 & 1.33 & 1.38 & 1.39 & 1.39 & 1.52 & 1.38 \\
OURS & 30 & 1.51 & 1.46 & 1.53 & 1.52 & 1.57 & 1.44 & 1.49 & 1.61 & \textbf{1.52} \\
\cmidrule{1-11}
STAND & 60 & 1.33 & 1.22 & 1.33 & 1.22 & 1.61 & 1.22 & 1.58 & 1.46 & 1.37 \\
LADE & 60 & 1.49 & 1.38 & 1.56 & 1.42 & 1.50 & 1.47 & 1.55 & 1.64 & 1.50 \\
OURS & 60 & 1.60 & 1.53 & 1.63 & 1.60 & 1.65 & 1.54 & 1.59 & 1.70 & \textbf{1.61} \\
\bottomrule
\end{tabular}%
}
\label{tab:llama3.2-3b-sample-block_efficiency}
\end{table}

\begin{table}[h]
\centering
\caption{Block-efficiency with \textbf{temperature=$1.0$} for \texttt{LLama3.1-8B-Instruct} across tasks for BC = 30 and BC = 60. Task acronyms: WRIT (writing), ROLE (roleplay), CODE (coding), TRANS (translation), SUMM (summarization), QA (question answering), RAG (retrieval-augmented generation), M/R (math/reasoning).}
\resizebox{0.75\textwidth}{!}{%
\begin{tabular}{llccccccccc}
\toprule
\textbf{Method} & \textbf{BC} & \textbf{WRIT} & \textbf{ROLE} & \textbf{CODE} & \textbf{TRANS} & \textbf{SUMM} & \textbf{QA} & \textbf{RAG} & \textbf{M/R} & \textbf{AVG} \\
\midrule
PLD & 30 & 1.40 & 1.12 & 1.25 & 1.20 & 1.51 & 1.14 & 1.48 & 1.40 & 1.31 \\
STAND & 30 & 1.24 & 1.14 & 1.34 & 1.25 & 1.56 & 1.17 & 1.50 & 1.39 & 1.32 \\
LADE & 30 & 1.22 & 1.16 & 1.31 & 1.28 & 1.26 & 1.21 & 1.25 & 1.32 & 1.25 \\
OURS & 30 & 1.52 & 1.48 & 1.59 & 1.56 & 1.61 & 1.53 & 1.55 & 1.65 & \textbf{1.56} \\
\cmidrule{1-11}
STAND & 60 & 1.26 & 1.17 & 1.35 & 1.28 & 1.60 & 1.21 & 1.56 & 1.47 & 1.36 \\
LADE & 60 & 1.25 & 1.22 & 1.38 & 1.37 & 1.41 & 1.28 & 1.34 & 1.46 & 1.34 \\
OURS & 60 & 1.61 & 1.57 & 1.70 & 1.65 & 1.70 & 1.63 & 1.64 & 1.74 & \textbf{1.66} \\
\bottomrule
\end{tabular}%
}
\label{tab:llama3.1-8b-sample-block_efficiency}
\end{table}

\subsection{Impact of Increasing Block Complexity}
\label{subsec:BC_ablation}
\begin{table}[h]
\centering
\caption{Impact of number of mask tokens on average acceptance length ($\tau$). Each additional mask doubles the budget: BC=30 ($m_1$), BC=60 ($m_1,m_2$), BC=120 ($m_1,m_2,m_3$). All configs use static tree expansion.}
\resizebox{0.5\textwidth}{!}{%
\begin{tabular}{lccc}
\toprule
\textbf{Model} & $m_1$ & $m_1,m_2$ & $m_1,m_2,m_3$ \\
              & \small{(BC=30)} & \small{(BC=60)} & \small{(BC=120)} \\
\midrule
\texttt{LLaMA3.2-3B-Instruct} & 1.561 & 1.631 & 1.669 \\
\texttt{LLaMA3.1-8B-Instruct} & 1.632 & 1.708 & 1.756 \\
\bottomrule
\end{tabular}%
}
\label{tab:bc_ablation}
\end{table}
We observe that increasing the number of mask tokens with BC leads to diminishing gains, as seen in \cref{tab:bc_ablation}. This is because autoregressive models are trained only to predict the next token, and therefore predicting further future tokens becomes increasingly harder.

\subsection{Comparison with Training-based Method}
\label{subsec:eagle3_comparison}
\begin{table}[h]
\centering
\caption{Speedup ratio (S/R) over autoregressive decoding. EAGLE3 requires 200--300 GPU hours of training; all other methods are training-free (0 GPU hours). N/A = no official EAGLE3 checkpoint available.}
\resizebox{0.72\textwidth}{!}{%
\begin{tabular}{llccccccc}
\toprule
\textbf{Model} & \textbf{BC} & \textbf{PLD} & \textbf{STAND} & \textbf{LADE} & \textbf{Ours} & \textbf{GPU hrs} & \textbf{EAGLE3} & \textbf{GPU hrs} \\
& & & & & & \textbf{(training)} & & \textbf{(training)} \\
\midrule
\multirow{2}{*}{\texttt{LLaMA-3.1-8B-Instruct}}
 & 30 & 1.24 & 1.10 & 1.06 & \textbf{1.35} & \multirow{2}{*}{0} & 2.36 & \multirow{2}{*}{$\sim$200--300} \\
 & 60 & 1.24 & 1.15 & 1.14 & \textbf{1.38} & & 2.61 & \\
\midrule
\multirow{2}{*}{\texttt{Qwen3-32B}}
 & 30 & 1.07 & 1.07 & 1.17 & \textbf{1.43} & \multirow{2}{*}{0} & \multicolumn{2}{c}{\multirow{2}{*}{N/A (no official checkpoint)}} \\
 & 60 & 1.07 & 1.11 & 1.24 & \textbf{1.48} & & & \\
\bottomrule
\end{tabular}%
}
\label{tab:eagle_comparison}
\end{table}
We compare training-free multi-token prediction algorithms with the state-of-the-art training-based method, EAGLE3 \citep{li2026eagle}, in \cref{tab:eagle_comparison}. The trade-off is clear: EAGLE3 requires 200--300 GPU hours of training per model (training time can increase for larger model sizes), whereas training-free methods can be used in a plug-and-play manner with no additional compute cost. 

\subsection{Impact of Number of Mask Tokens}
\label{subsec:num_mask_tokens}
We observe that different downstream tasks benefit from different numbers of mask tokens, as shown in \cref{tab:num_masks_bc60} for BC = 60. Tasks such as 'roleplay' and 'question answering' — more open-ended in nature — achieve higher block efficiency with a single mask token (shallow and wider tree). In contrast, tasks like 'coding', 'summarization', 'RAG', and 'math/reasoning' — more closed-ended — perform better with two mask tokens (deeper and more focused tree). 'Writing' shows mixed behavior: $m_1$ is preferred for \texttt{LLaMA3.2-3B-Instruct} while $m_{1},m_2$ is marginally better for \texttt{LLaMA3.1-8B-Instruct}. 'Translation' is largely tied across both configurations (difference $\leq 0.01$).

More specifically, for BC = 60, probing with a single mask token $m_1$ results in a token tree size of $30$ (shallow and wider), whereas using two mask tokens ($m_1$, $m_2$) yields a tree size of $20$ (deeper and less wide).

For smaller BC = 30, using a single mask token consistently outperforms two mask tokens, as shown in \cref{tab:num_masks_bc30}. The two-mask configuration uses our dynamic tree expansion algorithm. We consider performance to be similar when the BE difference is within $0.01$.

\begin{table}[h]
\centering
\caption{Block-efficiency across different tasks for \textbf{BC=}$\mathbf{30}$ with different mask tokens probed (single mask: $m_1$, two mask tokens: $m_{1}, m_{2}$). Task acronyms are WRIT: writing, ROLE: roleplay, CODE: coding, TRANS: translation, SUMM: summarization, M/R: math/reasoning}
\resizebox{0.75\textwidth}{!}{%
\begin{tabular}{lcccccccc}
\toprule
\textbf{Config} & \textbf{WRIT} & \textbf{ROLE} & \textbf{CODE} & \textbf{TRANS} & \textbf{SUMM} & \textbf{QA} & \textbf{RAG} & \textbf{M/R} \\
\midrule
\multicolumn{9}{c}{\texttt{Llama3.2-3B-Instruct}} \\
\midrule
$m_1$ & 1.56 & 1.54 & 1.57 & 1.56 & 1.64 & 1.55 & 1.70 & 1.69 \\
$m_1,m_{2}$ & 1.52 & 1.51 & 1.53 & 1.53 & 1.59 & 1.50 & 1.69 & 1.67 \\
\midrule
\multicolumn{9}{c}{\texttt{Llama3.1-8B-Instruct}} \\
\midrule
$m_1$ & 1.58 & 1.56 & 1.62 & 1.62 & 1.65 & 1.59 & 1.73 & 1.70 \\
$m_1,m_2$ & 1.54 & 1.45 & 1.58 & 1.59 & 1.62 & 1.52 & 1.71 & 1.68 \\
\bottomrule
\end{tabular}%
}
\label{tab:num_masks_bc30}
\end{table}

\begin{table}[h]
\centering
\caption{Block-efficiency across different tasks for \textbf{BC=}$\mathbf{60}$ comparing different mask tokens probed. Task acronyms are WRIT: writing, ROLE: roleplay, CODE: coding, TRANS: translation, SUMM: summarization, M/R: math/reasoning}
\resizebox{0.75\textwidth}{!}{%
\begin{tabular}{lcccccccc}
\toprule
\textbf{Config} & \textbf{WRIT} & \textbf{ROLE} & \textbf{CODE} & \textbf{TRANS} & \textbf{SUMM} & \textbf{QA} & \textbf{RAG} & \textbf{M/R} \\
\midrule
\multicolumn{9}{c}{\texttt{Llama3.2-3B-Instruct}} \\
\midrule
$m_1$ & \textbf{1.64} & \textbf{1.65} & 1.64 & 1.65 & 1.72 & \textbf{1.64} & 1.78 & 1.78 \\
$m_1,m_2$ & 1.62 & 1.61 & \textbf{1.67} & 1.64 & \textbf{1.74} & 1.62 & \textbf{1.83} & \textbf{1.81} \\
\midrule
\multicolumn{9}{c}{\texttt{Llama3.1-8B-Instruct}} \\
\midrule
$m_1$ & 1.67 & \textbf{1.64} & 1.73 & 1.70 & 1.74 & \textbf{1.69} & 1.80 & 1.78 \\
$m_1,m_2$ & \textbf{1.69} & 1.60 & \textbf{1.75} & 1.71 & 1.75 & 1.66 & \textbf{1.87} & \textbf{1.83} \\
\bottomrule
\end{tabular}%
}
\label{tab:num_masks_bc60}
\end{table}

\subsection{Impact of tree-pruner}
\label{subsec:impact_of_tree_pruner}
We evaluate the impact of our lightweight tree-pruner—which introduces no additional computational overhead—on block efficiency across SpecBench tasks for \texttt{LLaMA3.2-3B-Instruct} and \texttt{LLaMA3.1-8B-Instruct}, as shown in \cref{tab:tree_pruner_combined}. The pruner is particularly effective when using two mask tokens, improving BE by up to $3\%$ for \texttt{LLaMA3.2-3B-Instruct} and up to $4\%$ for \texttt{LLaMA3.1-8B-Instruct}. For the single mask token setting, the pruner maintains BE without degradation.

This empirical result suggests that the \textbf{Top-$K$ predictions from the second mask token often include the Top-1 token from the first mask}, leading to redundancy. Our pruner removes such repeated tokens during tree expansion, enabling more diverse branching without sacrificing accuracy.
\begin{table}[h]
\centering
\caption{Impact of tree pruning for different LLaMA models}
\resizebox{0.75\textwidth}{!}{%
\begin{tabular}{lcccc}
\toprule
\textbf{Method} & $\mathbf{m_{1}}$\textbf{(10)} & $\mathbf{m_{1}}$\textbf{(30)} & $\mathbf{m_{1},m_{2}}$\textbf{(30)} & $\mathbf{m_{1}, m_{2}}$\textbf{(60)} \\
\midrule
\multicolumn{5}{c}{\texttt{LLaMA3.2-3B-Instruct}} \\
\midrule
w/o tree pruning & 1.41 & 1.59 & 1.50 & 1.63 \\
w/ tree pruning & 1.41 & 1.59 & \textbf{1.55} & \textbf{1.67} \\
\midrule
\multicolumn{5}{c}{\texttt{LLaMA3.1-8B-Instruct}} \\
\midrule
% \textbf{Method} & $\mathbf{m_{1}}$\textbf{(10)} & $\mathbf{m_{1}}$\textbf{(30)} & $\mathbf{m_{1},m_{2}}$\textbf{(30)} & $\mathbf{m_{1}, m_{2}}$\textbf{(60)} \\
% \midrule
w/o tree pruning & 1.42 & 1.62 & 1.51 & 1.66 \\
w/ tree pruning & 1.42 & 1.62 & \textbf{1.57} & \textbf{1.71} \\
\bottomrule
\end{tabular}%
}
\label{tab:tree_pruner_combined}
\end{table}

\subsection{Impact of initializing mask tokens outside embedding distribution}
\label{subsec:sample_init}
To emphasize the robustness of mask token initialization, we compare soft initialization with sampling (as described in \cref{subsec:mask_token_design}) against initializing the mask token using a high standard deviation ($\mu + 5\sigma, \mu + 10\sigma$), where $\mu$ and $\sigma$ denote the \textbf{mean and standard deviation of the entire embedding table}. We observe that block efficiency is only slightly affected, indicating that mask token initialization is robust across different settings. This further suggests that LLMs possess an inherent ability to predict future tokens.
\begin{table}[h!]
\centering
\caption{\texttt{Llama3.2-3B-Instruct} average block efficiency (BE) across Spec-bench categories with BC=60. Task acronyms are WRIT: writing, ROLE: roleplay, CODE: coding, TRANS: translation, SUMM: summarization, M/R: math/reasoning}
\resizebox{0.75\textwidth}{!}{
\begin{tabular}{lcccccccc|c}
\hline
\textbf{Method} & \textbf{WRIT} & \textbf{ROLE} & \textbf{CODE} & \textbf{TRANS} & \textbf{SUMM} & \textbf{QA} & \textbf{RAG} & \textbf{M/R} & \textbf{AVG} \\
\hline
$\mu + \eps\sigma$ & 1.60 & 1.58 & 1.65 & 1.61 & 1.72 & 1.59 & 1.64 & 1.79 & \textbf{1.65} \\
$\mu + 5\sigma$               & 1.56 & 1.55 & 1.60 & 1.60 & 1.67 & 1.55 & 1.61 & 1.75 & \textbf{1.61} \\
$\mu + 10\sigma$              & 1.54 & 1.55 & 1.61 & 1.60 & 1.69 & 1.55 & 1.61 & 1.75 & \textbf{1.61} \\
\hline
\end{tabular}
}
\label{tab:avg_tokens}
\end{table}

\subsection{Impact of Mask Token Update Rate}
\label{subsec:lambda_ablation}

\begin{table}[h]
\centering
\caption{Impact of EMA update rate $\lambda$ for mask token embeddings on average acceptance length ($\tau$) at BC=30 and BC=60. Default $\lambda=0.1$ consistently performs best; higher $\lambda$ degrades performance more than lower $\lambda$, suggesting the method is robust to slower updates but sensitive to over-adaptation.}
\resizebox{0.75\textwidth}{!}{%
\begin{tabular}{llccc}
\toprule
\textbf{Model} & \textbf{BC} & $\lambda=0.01$ & $\lambda=0.1$ \small{(default)} & $\lambda=0.5$ \\
\midrule
\multirow{2}{*}{\texttt{LLaMA3.2-3B-Instruct}} 
 & 30 & 1.53 & \textbf{1.59} & 1.43 \\
 & 60 & 1.63 & \textbf{1.67} & 1.52 \\
\midrule
\multirow{2}{*}{\texttt{LLaMA3.1-8B-Instruct}}
 & 30 & 1.60 & \textbf{1.62} & 1.47 \\
 & 60 & 1.69 & \textbf{1.71} & 1.57 \\
\bottomrule
\end{tabular}%
}
\label{tab:lambda_ablation}
\end{table}

The EMA update rate $\lambda$ controls how quickly the mask token embedding adapts to the current context. As shown in \cref{tab:lambda_ablation}, the default $\lambda=0.1$ consistently achieves the best $\tau$ across both models and block complexities. A smaller $\lambda=0.01$ (slower adaptation) incurs only a minor drop of $\sim$0.02--0.04, while a larger $\lambda=0.5$ (faster adaptation) degrades performance more significantly ($\sim$0.10--0.15), suggesting the mask token benefits from stable, smoothed updates rather than rapidly tracking individual token statistics.

\subsection{Qualitative Examples of Token Accepted}
\label{subsec:qualatative_result}
To qualitatively evaluate the effectiveness of our probing-based multi-token prediction, we use the Dolly-Databricks dataset \citep{DatabricksBlog2023DollyV2} focused on creative writing. This allows us to inspect which future tokens are correctly predicted by the model without any training, simply through probing with mask tokens.

Below is a generation from \texttt{LLaMA3.2-3B-Instruct} when probed with two mask tokens under block complexity (BC) = 60. The model is able to predict several future tokens (\textcolor{blue}{blue}) with just simple mask token probing—without any training or any N-gram cache.
% \fbox{
% prompt:
% \\
% \colorbox{pink}{$<$|begin\_of\_text|$><$|start\_header\_id|$>$user$<$|end\_header\_id|$>$
% How do you make the best shepherds pie?$<$|eot\_id|>$<$|start\_header\_id|$>$assistant$<$|end\_header\_id|$>$}
% }

\fbox{%
\parbox{\textwidth}{%  % Use \columnwidth for two-column format
\setlength{\fboxsep}{5pt}%  % Reduce padding
\colorbox{gray!20}{%
\parbox{\dimexpr\linewidth-2\fboxsep-2\fboxrule\relax}{%
\small  % Reduce font size
\textbf{Prompt:} \\
\texttt{<|begin\_of\_text|><|start\_header\_id|>user<|end\_header\_id|>} \\
How do you make the best shepherds pie? \\
\texttt{<|eot\_id|><|start\_header\_id|>}assistant\texttt{<|end\_header\_id|>}
}%
}%
\vspace{0.5em}  % Reduced vertical space

\small  % Reduce font size for the entire content
\setlength{\parskip}{0.3em}  % Reduce paragraph spacing
\setlength{\itemsep}{0pt}  % Reduce item spacing in lists

Shepherd\textcolor{blue}{'s} pie\textcolor{blue}{, a} classic \textcolor{blue}{ British} dish \textcolor{blue}{ that}'s \textcolor{blue}{ easy} to \textcolor{blue}{ make} and \textcolor{blue}{ always} a \textcolor{blue}{ crowd}-\textcolor{blue}{ple}aser! \textcolor{blue}{Here's} a \textcolor{blue}{ recipe for} the \textcolor{blue}{ best} shepherd's pie\textcolor{blue}{, with} a \textcolor{blue}{ few} tips \textcolor{blue}{ and} tricks \textcolor{blue}{ to} ensure it\textcolor{blue}{'s} a \textcolor{blue}{ true} culinary masterpiece\textcolor{blue}{:}

\vspace{-0.2em}
\textbf{Ingredients\textcolor{blue}{:}}
\vspace{-0.2em}

\textcolor{blue}{For} the filling\textcolor{blue}{:}
\begin{itemize}\setlength{\itemsep}{0pt}\setlength{\parskip}{0pt}
\item[*] \textcolor{blue}{1} pound \textcolor{blue}{ ground} lamb \textcolor{blue}{ or} beef \textcolor{blue}{(}or \textcolor{blue}{a} combination \textcolor{blue}{of} both\textcolor{blue}{)}
\item[*] \textcolor{blue}{1} onion\textcolor{blue}{,} diced
\item[*] \textcolor{blue}{2} cloves \textcolor{blue}{ of} garlic\textcolor{blue}{, minced} 
\item[\textcolor{blue}{*}] \textcolor{blue}{1} cup \textcolor{blue}{ frozen} peas \textcolor{blue}{ and} carrots
\item[*] \textcolor{blue}{1} cup \textcolor{blue}{ beef} broth
\item[*] \textcolor{blue}{2} tablespoons \textcolor{blue}{ tomato} paste
\item[\textcolor{blue}{*}] \textcolor{blue}{2} tablespoons \textcolor{blue}{ all}-purpose flour
\item[\textcolor{blue}{*}] \textcolor{blue}{2} teaspoons \textcolor{blue}{ Worce}stershire \textcolor{blue}{sauce}
\item[*] \textcolor{blue}{1} teaspoon \textcolor{blue}{ dried} thyme
\item[*] Salt \textcolor{blue}{ and} pepper\textcolor{blue}{,} to \textcolor{blue}{ taste}
\end{itemize}

\vspace{-0.3em}
\textcolor{blue}{For } the \textcolor{blue}{mashed} potato topping\textcolor{blue}{:}
\begin{itemize}\setlength{\itemsep}{0pt}\setlength{\parskip}{0pt}
\item[*] \textcolor{blue}{3}-\textcolor{blue}{4} large \textcolor{blue}{ potatoes,} peeled \textcolor{blue}{ and} chopped into \textcolor{blue}{1}-inch \textcolor{blue}{cubes}
\item[\textcolor{blue}{*}] \textcolor{blue}{1}/\textcolor{blue}{4} cup \textcolor{blue}{ milk} or \textcolor{blue}{ half}-and-half
\item[\textcolor{blue}{*}] \textcolor{blue}{2} tablespoons \textcolor{blue}{ butter}
\item[\textcolor{blue}{*}] Salt \textcolor{blue}{ and} pepper\textcolor{blue}{,} to \textcolor{blue}{ taste}
\end{itemize}

\vspace{-0.2em}
\textbf{\textcolor{blue}{**}Instructions\textcolor{blue}{:}**}
\vspace{-0.2em}

\textcolor{blue}{1}.\textcolor{blue}{ **}Pre\textcolor{blue}{heat} your oven \textcolor{blue}{ to} 350\textcolor{blue}{°F} (\textcolor{blue}{180}°C).\textcolor{blue}{**}

\vspace{-0.3em}
2.**\textcolor{blue}{Make} the \textcolor{blue}{ filling}:\textcolor{blue}{**}
\begin{itemize}\setlength{\itemsep}{0pt}\setlength{\parskip}{0pt}
\item[*] In \textcolor{blue}{ a} large \textcolor{blue}{ pan} over \textcolor{blue}{ medium}-high heat\textcolor{blue}{,} cook \textcolor{blue}{ the} ground \textcolor{blue}{ lamb or} beef \textcolor{blue}{ until} browned,\textcolor{blue}{ breaking} it \textcolor{blue}{ up} with\textcolor{blue}{ a} spoon \textcolor{blue}{ as it} cooks\textcolor{blue}{.}
\item[*] \textcolor{blue}{Add} the \textcolor{blue}{ diced onion} and \textcolor{blue}{ cook} until \textcolor{blue}{ softened}, \textcolor{blue}{ about} \textcolor{blue}{5} minutes.
\item[*] Add \textcolor{blue}{ the} minced \textcolor{blue}{ garlic} and \textcolor{blue}{ cook} for \textcolor{blue}{ }1 \textcolor{blue}{ minute}.
\item[*] Stir in\textcolor{blue}{ the} frozen \textcolor{blue}{ peas} and \textcolor{blue}{ carrots}, \textcolor{blue}{ beef} broth\textcolor{blue}{, tomato} paste\textcolor{blue}{,} flour\textcolor{blue}{, Worce}sters\textcolor{blue}{hire} sauce\textcolor{blue}{, and} th\textcolor{blue}{yme}. \textcolor{blue}{ Bring} the \textcolor{blue}{ mixture} to \textcolor{blue}{ a boil}, \textcolor{blue}{ then} reduce\textcolor{blue}{ the} heat \textcolor{blue}{ to low} and \textcolor{blue}{ simmer for} \textcolor{blue}{10}-15 minutes\textcolor{blue}{, stirring} occasionally.
\item[*] \textcolor{blue}{Season} the \textcolor{blue}{ filling} with \textcolor{blue}{ salt and} pepper \textcolor{blue}{ to} taste\textcolor{blue}{.}
\end{itemize}

\vspace{-0.3em}
3.\textcolor{blue}{ **}Make the \textcolor{blue}{ mashed potato} topping \textcolor{blue}{:**}
\begin{itemize}\setlength{\itemsep}{0pt}\setlength{\parskip}{0pt}
\item[*] \textcolor{blue}{Bo}il the \textcolor{blue}{ chopped} potatoes \textcolor{blue}{ in} a \textcolor{blue}{ large} pot \textcolor{blue}{ of} salted water \textcolor{blue}{ until they}'re tender\textcolor{blue}{, about} \textcolor{blue}{15}-\textcolor{blue}{20} minutes\textcolor{blue}{.}
\item[*] \textcolor{blue}{Drain} the \textcolor{blue}{ potatoes and} return them \textcolor{blue}{ to the} pot. \textcolor{blue}{ Add} the \textcolor{blue}{ milk}, \textcolor{blue}{ butter}, \textcolor{blue}{ salt}, \textcolor{blue}{ and} pepper\textcolor{blue}{,} an d\textcolor{blue}{ mash} the \textcolor{blue}{ potatoes} with \textcolor{blue}{ a} potato m\textcolor{blue}{asher} or \textcolor{blue}{ a} fork until \textcolor{blue}{ they}'re smooth\textcolor{blue}{ and creamy}.
\end{itemize}

\vspace{-0.3em}
\textcolor{blue}{4}.\textcolor{blue}{ **}Assemble \textcolor{blue}{ the} shepherd\textcolor{blue}{'s} pie\textcolor{blue}{:**}
\begin{itemize}\setlength{\itemsep}{0pt}\setlength{\parskip}{0pt}
\item[*] Transfer \textcolor{blue}{ the} filling to \textcolor{blue}{ a} \textcolor{blue}{9}x\textcolor{blue}{13}-inch \textcolor{blue}{ baking} dish\textcolor{blue}{.}
\item[*] Spread \textcolor{blue}{ the} mashed \textcolor{blue}{ potato} topping \textcolor{blue}{ evenly} over \textcolor{blue}{ the filling}, \textcolor{blue}{ making} sure \textcolor{blue}{ to} cover \textcolor{blue}{ the} entire \textcolor{blue}{ surface}.
\end{itemize}

\vspace{-0.3em}
\textcolor{blue}{5}.\textcolor{blue}{ **}Bake the \textcolor{blue}{ shepherd}'s pie\textcolor{blue}{:**}
\begin{itemize}\setlength{\itemsep}{0pt}\setlength{\parskip}{0pt}
\item[*] Place the \textcolor{blue}{ dish} in \textcolor{blue}{ the} pre\textcolor{blue}{he}ated \textcolor{blue}{ oven} and \textcolor{blue}{ bake} for \textcolor{blue}{ }25-\textcolor{blue}{30} minutes\textcolor{blue}{, or} until golden and bubbling.
\end{itemize}
}%
}

%%%%%%%%%%%%%%%%%%%%%%%%%%%%%%%%%%%%%%%%%%%%%%%%%%%%%%%%%%%%%%%%%%%%%%%%%%%%%%%
%%%%%%%%%%%%%%%%%%%%%%%%%%%%%%%%%%%%%%%%%%%%%%%%%%%%%%%%%%%%%%%%%%%%%%%%%%%%%%%

\end{document}